\def \st {\ \ \textnormal{s.t.} \ \ }
\def \ST {\ \ \textnormal{subject to} \ \ }
\def \Re{\mathbb{R}}
\DeclareMathOperator*{\argmin}{arg\,min}
\theoremstyle{plain}
\newtheorem{lemma}{Lemma}
\newtheorem{proposition}{Proposition}
\newtheorem{theorem}{Theorem}
\theoremstyle{remark}
\theoremstyle{definition}
\def\Lam{\mathbf{\Lambda}}
\def\Gam{\mathbf{\Gamma}}
\def\P{\mathbf{P}}
\def\k{\mathbf{k}}
\def\q{\mathbf{q}}
\def\Q{\mathbf{Q}}
\def\b{\mathbf{b}}
\def\B{\mathbf{B}}
\def\W{\mathcal{W}}
\def\Wb{\mathbf{W}}
\def\X{\mathbf{X}}
\def\I{\mathbf{I}}
\def\z{\mathbf{z}}
\def\Z{\mathbf{Z}}
\def\c{\mathbf{c}}
\def\C{\mathbf{C}}
\def\tr{\mathrm{tr}}
\def\sgn{\mathrm{sgn}}
\def\diag{\mathrm{diag}}
\def\Diag{\mathrm{Diag}}
\def\0{\mathbf{0}}
\def\1{\mathbf{1}}
\def\Sp{{\mathcal{S}_p}}
\newcommand{\myparagraph}[1]{\textbf{#1.}}
\title{A Critique of Self-Expressive Deep Subspace Clustering}
\author{Benjamin D. Haeffele \\
Mathematical Institute for Data Science\\
Johns Hopkins University\\
Baltimore, MD, USA\\
\texttt{bhaeffele@jhu.edu} \\
\And
Chong You \\
Department of Electrical Engineering and Computer Sciences \\
University of California, Berkeley \\
Berkeley, CA, USA\\
\texttt{cyou@berkeley.edu}
\AND
Ren\'{e} Vidal \\
Department of Biomedical Engineering \\
Johns Hopkins University \\
Baltimore, MD, USA
}
\begin{document}

\maketitle

\begin{abstract}
Subspace clustering is an unsupervised clustering technique designed to cluster data that is supported on a union of linear subspaces, with each subspace defining a cluster with dimension lower than the ambient space. Many existing formulations for this problem are based on exploiting the self-expressive property of linear subspaces, where any point within a subspace can be represented as linear combination of other points within the subspace. To extend this approach to data supported on a union of non-linear manifolds, numerous studies have proposed learning an embedding of the original data using  a neural network which is regularized by a self-expressive loss function on the data in the embedded space to encourage a union of linear subspaces prior on the data in the embedded space. Here we show that there are a number of potential flaws with this approach which have not been adequately addressed in prior work. In particular, we show the model formulation is often ill-posed in that it can lead to a degenerate embedding of the data, which need not correspond to a union of subspaces at all and is poorly suited for clustering. We validate our theoretical results experimentally and also repeat prior experiments reported in the literature, where we conclude that a significant portion of the previously claimed performance benefits can be attributed to an ad-hoc post processing step rather than the deep subspace clustering model.
\end{abstract}

\section{Introduction and Background}

Subspace clustering is a classical unsupervised learning problem, where one wishes to segment a given dataset into a prescribed number of clusters, and each cluster is defined as a linear (or affine) subspace with dimension lower than the ambient space.  There have been a wide variety of approaches proposed in the literature to solve this problem \citep{Vidal:Springer16}, but a large family of state-of-the-art approaches are based on exploiting the self-expressive property of linear subspaces.  That is, if a point lies in a linear subspace, then it can be represented as a linear combination of other points within the subspace.  Based on this fact, a wide variety of methods have been proposed which, given a dataset $\Z \in \Re^{d \times N}$ of $N$ $d$-dimensional points, find a matrix of coefficients $\C \in \Re^{N \times N}$ by solving the problem:
\begin{equation}
\label{eq:basic_obj}
\min_{\C \in \Re^{N \times N}} \left\{ F(\Z,\C) \equiv \frac{1}{2} \|\Z \C-\Z\|_F^2 + \lambda \theta(\C) = \frac{1}{2} \langle \Z^\top \Z, (\C-\I)(\C-\I)^\top \rangle + \lambda \theta(\C) \right\}.
\end{equation}
Here, the first term $\|\Z\C-\C\|_F^2$ captures the self-expressive property by requiring every data-point  to represent itself as an approximate linear combination of other points, i.e., $\Z_i \approx \Z \C_i$, where $\Z_i$ and $\C_i$ are the $i^\text{th}$ columns of $\Z$ and $\C$, respectively. The second term, $\theta(\C)$, is some regularization function designed to encourage each data point to only select other points within the correct subspace in its representation and to avoid trivial solutions (such as $\C=\I$).  Once the $\C$ matrix has been solved for, one can then define a graph affinity between data points, typically based on the magnitudes of the entries of $\C$, and use an appropriate graph-based clustering method (e.g., spectral clustering \citep{vonLuxburg:StatComp2007}) to produce the final clustering of the data points.

One of the first methods to utilize this approach was Sparse Subspace Clustering (SSC) \citep{Elhamifar:CVPR09,Elhamifar:TPAMI13}, where $\theta$ takes the form $\theta_{SSC}(\C) = \|\C\|_1 + \delta(\diag(\C) = \0)$, with $\|\cdot\|_1$ denoting the $\ell_1$ norm and $\delta$ an indicator function which takes value $\infty$ if an element of the diagonal of $\C$ is non-zero and $0$ otherwise.  By regularizing $\C$ to be sparse, a point must represent itself using the smallest number of other points within the dataset, which in turn ideally requires a point to only select other points within its own subspace in the representation.  Likewise other variants, with Low-Rank Representation (LRR) \citep{Liu:TPAMI13}, Low-Rank Subspace Clustering (LRSC) \citep{Vidal:PRL14} and Elastic-net Subspace Clustering (EnSC) \citep{You:CVPR16-EnSC} being well-known examples, take the same form as \eqref{eq:basic_obj} with different choices of regularization.  For example, $\theta_{LRR}(\C) = \|\C\|_*$ and $\theta_{EnSC}(\C) = \|\C\|_1 + \tau \|\C\|_F^2 + \delta(\diag(\C) = \0)$, where $\|\cdot\|_*$ denotes the nuclear norm (sum of the singular values).   A significant advantage of the majority of these methods is that it can be proven (typically subject to some technical assumptions regarding the angles between the underlying subspaces and the distribution of the sampled data points within the subspaces) that the optimal $\C$ matrix in \eqref{eq:basic_obj} will be ``correct" in the sense that if $\C_{i,j}$ is non-zero then the $i^\text{th}$ and $j^\text{th}$ columns of $\Z$ lie in the same linear subspace \citep{Soltanolkotabi:AS12,Lu:ECCV12,Elhamifar:TPAMI13,Soltanolkotabi:AS14,Wang:ICML15,Wang:JMLR16,You:arxiv15-SSR,You:ICML15,Yang:ECCV16,Tsakiris:ICML18,Li:JSTSP18,You:ICCV19,Robinson:arxiv19}, which has led to these methods achieving state-of-the-art performance in many applications.

\subsection{Self-Expressive Deep Subspace Clustering}

Although subspace clustering techniques based on self-expression display strong empirical performance and provide theoretical guarantees, a significant limitation of these techniques is the requirement that the underlying dataset needs to be approximately supported on a union of linear subspaces.  This has led to a strong motivation to extend these techniques to more general datasets, such as data supported on a union of non-linear low-dimensional manifolds.  From inspection of the right side of \eqref{eq:basic_obj}, one can observe that the only dependence on the data $\Z$ comes in the form of the Gram matrix $\Z^\top \Z$.  As a result, self-expressive subspace clustering techniques are amendable to the ``kernel-trick", where instead of taking an inner product kernel between data points, one can instead use a general kernel $\kappa(\cdot,\cdot)$ \citep{Patel:ICIP14}.  Of course, such an approach comes with the traditional challenge of how to select an appropriate kernel so that the embedding of the data in the Hilbert space associated with the choice of kernel results in a union of linear subspaces.

The first approach to propose learning an appropriate embedding of an initial dataset $\X \in \Re^{d_x \times N}$ (which does not necessarily have a union of subspaces structure) was given by \cite{Patel:ICCV13,Patel:JSTSP15} who proposed first projecting the data into a lower dimensional space via a learned linear projector, $\Z = \P_l \X$, where $\P_l \in \Re^{d \times d_x}$ $(d < d_x)$ is also optimized over in addition to $\C$ in \eqref{eq:basic_obj}.   To ensure that sufficient information about the original data $\X$ is preserved in the low-dimensional embedding $\Z$, the authors further required that the linear projector satisfy the constraint that $\P_l \P_l^\top = \I$ and added an additional term to the objective with form $\|\X - \P_l^\top \P_l \X\|_F^2$. However, since the projector is linear, the approach is not well suited for nonlinear manifolds, unless it  is augmented with a kernel embedding, which again requires choosing a suitable kernel.

\begin{figure}[tb]
\includegraphics[width=\linewidth]{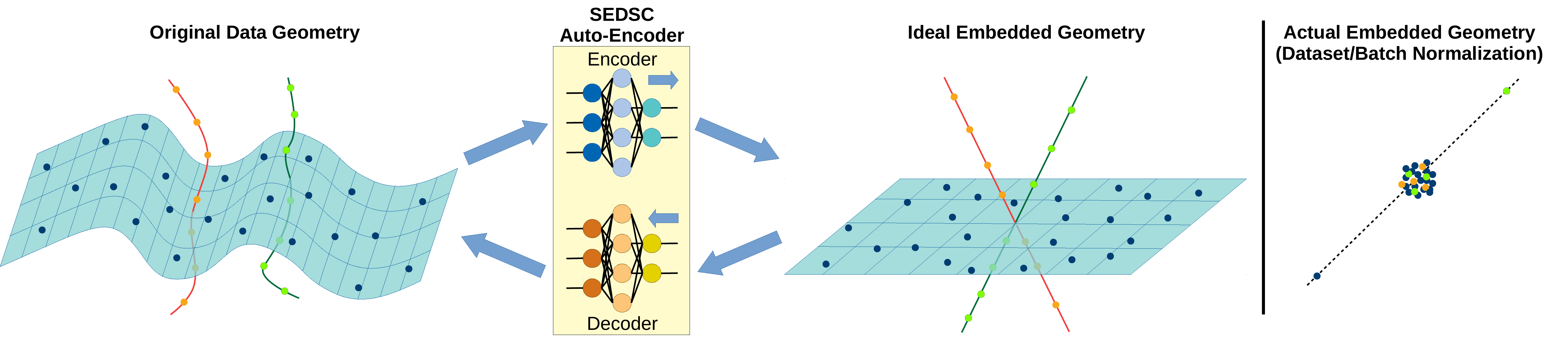}
\caption{Illustration of our theoretical results.  The goal of the SEDSC model is to train a network (typically an auto-encoder) to map data from a union of non-linear manifolds (\textbf{Left}) to a union of linear subspaces (\textbf{Center}).  However, we show that for many of the formulations that have been proposed in the literature the global optimum of the model will have a degenerate geometry in the embedded space.  For example, in the Dataset and Channel/Batch normalization schemes Theorem \ref{thm:opt_ssc} shows that the globally optimal geometry will have all points clustered near the origin with the exception of two points, which will be copies of each other (to within a sign-flip) (\textbf{Right}).}
\label{fig:poor_embed}
\vspace{-5mm}
\end{figure}

More recently, given the success of deep neural networks, a large number of studies \cite{Peng:arxiv17,Ji:NIPS17,Zeng:IGRSS2019,Zeng:GRSL2019,Xie:NNLS2020,Sun:ACML2019,Li:ICCV2019,Yang:arxiv2019,Jiang:AICM2019,Tang:INCC2018,Kheirandishfard:WCACV2020,Zhou:IJCAI2019,Jiang:AICM2018,Abavisani:JSTSP2018,Zhou:CVPR2018,Zhang:PAMI2018,Zhang:ICML19,Zhang:CVPR19,Kheirandishfard:CVPR2020} have attempted to learn an appropriate embedding of the data (which ideally would have a union of linear subspaces structure) via a neural network, $\Phi_E(\X,\W_e)$, where $\W_e$ denotes the parameters of a network mapping defined by $\Phi_E$, which takes a dataset $\X \in \Re^{d_x \times N}$ as input.  In an attempt to encourage the embedding of the data, $\Phi_E(\X,\W_e)$, to have this union of subspaces structure, these approaches minimize a self-expressive loss term, with form given in \eqref{eq:basic_obj}, on the embedded data, and a large majority of these proposed techniques can be described by the general form:
\begin{equation}
\label{eq:gen_obj}
\min_{\W_e,\C} \gamma F(\Z,\C) + g(\Z,\X,\C) \ST \Z = \Phi_E(\X,\W_e)
\end{equation}
where $g$ is some function designed to discourage trivial solutions (for example $\Phi_E(\X,\W_e) = \0$) and $\gamma > 0$ is some hyper-parameter to balance the terms.

Several different choices of $g$ have been proposed in the literature.  The first is to place some form of normalization directly on $\Z$. For example, \cite{Peng:arxiv17} propose an \textbf{Instance Normalization} regularization,  $g(\Z,\X,\C) = \sum_{i=1}^N (\Z_i^\top \Z_i -1)^2$,
which attempts to constrain the norm of the embedded data points to be 1. Likewise, one could also consider \textbf{Dataset Normalization} schemes, which bound the norm of the entire embedded representation $\|\Z\|_F^2 \geq \tau$ or \textbf{Channel/Batch Normalization} schemes, which bound the norm of a channel of the embedded representation (i.e., a row of $\Z$), $\|\Z^i\|_F^2 \geq \tau, \ \forall i$.  We note that this is quite similar to the common Batch Norm operator \citep{Ioffe:ICML2015} used in neural network training which attempts to constrain each row of $\Z$ to have zero mean and constant norm.

Another popular form of $g$ is to also train a decoding network $\Phi_D(\cdot,\W_d)$ with parameters $\W_d$ to map the self-expressive representation, $\Phi_E(\X,\W_e)\C$, back to the original data to ensure that sufficient information is preserved in the self-expressive representation to recover the original data.  We will refer to this as \textbf{Autoencoder Regularization}. This idea is essentially a generalization of the previously discussed work, which considered constrained linear encoder/decoder mappings \citep{Patel:ICCV13,Patel:ICIP14,Patel:JSTSP15}, to non-linear autoencodering neural networks and was first proposed by the authors of \cite{Ji:NIPS17}. The problem takes the general form:
\begin{align}
\label{eq:autoenc_obj}
\min_{\W_e, \W_d, \C} \gamma F(\Z, \C)+\ell(\X, \Phi_D( \Z \C, \W_d))  \ST \Z =\Phi_E(\X,\W_e),
\end{align}
where the first term is the standard self-expressive subspace clustering loss applied to the embedded representation, and the second term is a standard auto-encoder loss, with $\ell$ typically chosen to be the squared loss.  Note that here both the  encoding/decoding network and the optimal self-expression encoding, $\C$, are trained jointly, and once problem \eqref{eq:autoenc_obj} is solved one can use the recovered $\C$ matrix directly for clustering.

Using the general formulation in \eqref{eq:gen_obj} and the popular specific case in \eqref{eq:autoenc_obj}, Self-Expressive Deep Subspace Clustering (SEDSC) has been applied to a variety of applications, but there is relatively little that it known about it from a theoretical standpoint.  Initial formulations for SEDSC were guided by the intuition that if the dataset is drawn from a union of linear subspaces, then solving problem \eqref{eq:basic_obj} is known to induce desirable properties in $\C$ for clustering. By extension one might assume that if one also optimizes over the geometry of the learned embedding ($\Z)$ this objective might induce a desirable geometry in the embedded space (e.g., a union of linear subspaces).  However, a vast majority of the prior theoretical analysis for problems of the form in \eqref{eq:basic_obj} only considers the case where the data is held fixed and analyzes the properties of the optimal $\C$ matrix. Due to the well-known fact that neural networks are capable of producing highly-expressive mapping functions (and hence a network could produce many potential values for $\Z$), the use of a model such as \eqref{eq:gen_obj}/\eqref{eq:autoenc_obj} is essentially using \eqref{eq:basic_obj} as a \textit{regularization function} on $\Z$ to encourage a union of subspaces geometry.  To date, however, models such as \eqref{eq:gen_obj}/\eqref{eq:autoenc_obj} have been guided largely by intuition and significant questions remain regarding what type of data geometry is encouraged by $F(\Z,\C)$ when one optimizes over both the encoding matrix, $\C$, and the network producing the  embedded data representation, $\Z$.

\subsection{Paper Contributions}

Here we explore these questions via theoretical analysis where we show that the use of $F(\Z,\C)$ as a regularization function when learning a kernel from the data in an attempt to promote beneficial data geometries in the embedded space, as is done in \eqref{eq:gen_obj}, is largely insufficient in the sense that the optimal data geometries are trivial and not conducive to successful clustering in most cases.  Specifically, we first note a basic fact that the Autoencoder Regularization formulation in \eqref{eq:autoenc_obj} is typically ill-posed for most commonly used networks if constraints are not placed on the magnitude of the embedded data, $\Phi_E(\X,\W_e)$, either through regularization/constraints on the autoencoder weights/architecture or an explicit normalization of $\Phi_E(\X,\W_e)$, such as in the Instance/Batch/Dataset Normalization schemes.  Then, even assuming that the embedded representation has been suitably normalized, we show that the optimal embedded data geometry encouraged by $F(\Z,\C)$ is trivial in various ways, which will depend on how the data is normalized.  We illustrate our theoretical predictions with experiments on both real and synthetic data.  Further, we show experimentally that much of the claimed performance benefit of the SEDSC model reported in previous work can be attributed to an ad-hoc post-processing of the $\C$ matrix first proposed in \cite{Ji:NIPS17}.


\myparagraph{Notation} We will denote matrices with capital boldfaced letters, $\Z$, vectors which are not rows/columns of a larger matrix with lower-case boldfaced letters, $\z$, and sets with calligraphic letters, $\mathcal{Z}$.  The $i^\text{th}$ row of a matrix will be denoted with a superscript, $\Z^i$; the $i^\text{th}$ column of a matrix will be denoted with a subscript, $\Z_i$; the $(i,j)^\text{th}$ entry of a matrix will be denoted as $\Z_{i,j}$; and the $i^\text{th}$ entry of a vector will be denoted as $\z_i$.  We will denote the minimum singular value of a matrix $\Z$ as $\sigma_{\min}(\Z)$, and we will denote the nuclear, Frobenius, $\ell_p$, and Schatten-$p$ norms\footnote{Recall, the \textbf{Schatten-$p$ norm} of a matrix $\C$ is defined as the $\ell_p$ norm, $p \geq 1$, of the singular values of $\C$.} for a matrix/vector as $\|\Z\|_*$, $\|\Z\|_F$, $\|\Z\|_p$, and $\|\Z\|_\Sp$ respectively.   $\delta(cnd)$ denotes an indicator function with value $0$ if condition $cnd$ is true and $\infty$ otherwise.

\section{Theoretical Analysis}

\subsection{Basic Scaling Issues}

We begin our analysis of the SEDSC model by considering the most popular formulation, which is to employ Autoencoder Regularization as in \eqref{eq:autoenc_obj}.  Specifically, we note that without any regularization on the autoencoder network parameters $(\W_e,\W_d)$ or any normalization placed on the embedded representation, $\Phi_E(\X,\W_e)$, the formulation in \eqref{eq:autoenc_obj} is often ill-posed in the sense that the value of $F(\Phi_E(\X,\W_e),\C)$ can be made arbitrarily small without changing the value of the autoencoder loss by simply scaling-down the weights in the encoding network and scaling-up the weights in the decoding network in a way which doesn't change the output of the autoencoder but reduces the magnitude of the embedded representation.  As we will see, a sufficient condition for this to be possible is when the non-linearity in the final layer of the encoder is positively-homogeneous\footnote{Recall, a function $f(x)$ is \textbf{positively-homogeneous of degree $p$} if $f(\alpha x) = \alpha^p f(x), \ \forall \alpha \geq 0$.}.  We further note that most commonly used non-linearities in neural networks are positively homogenous, with Rectified Linear Units (ReLUs), leaky ReLUs, and max-pooling being common examples.  As a result, most autoencoder architectures employed for SEDSC will require some form of regularization or normalization for the $F$ term to have any impact on the geometry of the embedded representation (other than trivially becoming smaller in magnitude), though many proposed formulations do not take this issue into account.

As a basic example, consider fully-connected single-hidden-layer networks for the encoder/decoder, $\Phi_E(\X,\W_e) = \mathbf{W}_e^2 (\mathbf{W}_e^1 \X)_+$ and $\Phi_D(\Z,\W_d) = \mathbf{W}_d^2 (\mathbf{W}_d^1 \Z)_+$, where $(x)_+ = \max \{x ,0\}$ denotes the Rectified Linear Unit (ReLU) non-linearity applied entry-wise.  Then, note that because the ReLU function is positively homogeneous of degree 1 for any $\alpha \geq 0$ one has $\Phi_E(\X,\alpha \W_e) = (\alpha \mathbf{W}_e^2)(\alpha \mathbf{W}_e^1 \X)_+ = \alpha^2 \mathbf{W}_e^2 (\mathbf{W}_e^1 \X)_+ = \alpha^2 \Phi_E(\X,\W_e)$ and similarly for $\Phi_D$.  As a result one can scale $\W_e$ by any $\alpha > 0$ and $\W_d$ by $\alpha^{-1}$ without changing the output of the autoencoder, $\Phi_D(\Phi_E(\X,\alpha \W_e), \alpha^{-1} \W_d) = \Phi_D(\Phi_E(\X, \W_e), \W_d)$, but $\|\Phi_E(\X,\alpha \W_e)\|_F = \alpha^2 \|\Phi_E(\X,\W_e)\|_F$.  From this, taking $\alpha$ to be arbitrarily small the magnitude of the embedding can also be made arbitrarily small, so if $\theta(\C)$ in \eqref{eq:basic_obj} is something like a norm (as is typically the case) the value of $F(\Z,\C)$ in \eqref{eq:autoenc_obj} can also be made arbitrarily small without changing the reconstruction loss of the autoencoder $\ell$.  This implies one is simply training an autoencoder without any contribution from the $F(\Z,\C)$ term (other than trying to reduce the magnitude of the embedded representation). This basic idea can be easily generalized and formalized in the following statement (all proofs are provided in the Supplementary Material):
\begin{proposition}
\label{prop:ill-posed}
Consider the objective in \eqref{eq:autoenc_obj} and suppose $\theta(\C)$ in \eqref{eq:basic_obj} is a function which achieves its minimum at $\C=\0$ and satisfies $\theta(\mu \C) < \theta(\C), \ \forall \mu \in (0,1)$.  If for any choice of $(\W_d,\W_e)$ and $\tau_1, \tau_2 \in (0,1)$ there exists $(\hat \W_d, \hat \W_e)$ such that $\Phi_E(\X, \hat \W_e) = \tau_1 \Phi_E(\X, \W_e)$ and $\Phi_D( \tau_2 \Z, \hat \W_d) = \Phi_D(\Z, \W_d)$ then the $F$ term in \eqref{eq:autoenc_obj} can be made arbitrarily small without changing the value of the loss function $\ell$.  Further, if the final layer of the encoding network is positively-homogeneous with degree $p\neq0$, such a $(\hat \W_d, \hat \W_e)$ will always exist simply by scaling the network weights of the linear (or affine) operator parameters.
\end{proposition}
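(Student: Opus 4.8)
The plan is to prove both assertions by an explicit reparameterization of \eqref{eq:autoenc_obj}: starting from an arbitrary feasible triple $(\W_e,\W_d,\C)$, I would construct a one-parameter family of feasible triples along which the autoencoder term $\ell$ is \emph{exactly} preserved while the $F$ term tends to its infimum. Write $\Z=\Phi_E(\X,\W_e)$. Given scalars $\mu\in(0,1)$ and $\tau_1\in(0,1)$, set $\tau_2=\mu\tau_1\in(0,1)$; by the stated hypothesis there is a pair $(\hat\W_e,\hat\W_d)$ with $\Phi_E(\X,\hat\W_e)=\tau_1\Z$ and $\Phi_D(\tau_2 M,\hat\W_d)=\Phi_D(M,\W_d)$ for the relevant input $M$, and I set $\hat\C=\mu\C$. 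The key identity is $\hat\Z\hat\C=(\tau_1\Z)(\mu\C)=\tau_2\,\Z\C$, whence $\ell\bigl(\X,\Phi_D(\hat\Z\hat\C,\hat\W_d)\bigr)=\ell\bigl(\X,\Phi_D(\tau_2\,\Z\C,\hat\W_d)\bigr)=\ell\bigl(\X,\Phi_D(\Z\C,\W_d)\bigr)$, so the reconstruction loss is unchanged (the decoder condition being used here with $M=\Z\C$).

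Next I would bound the $F$ term along this family: $F(\hat\Z,\hat\C)=\tfrac12\|\tau_2\,\Z\C-\tau_1\Z\|_F^2+\lambda\theta(\mu\C)\le\tfrac{\tau_1^2}{2}\bigl(\|\Z\C\|_F+\|\Z\|_F\bigr)^2+\lambda\theta(\mu\C)$, using $\tau_2=\mu\tau_1$, the triangle inequality, and $\mu<1$. The first term is a fixed constant times $\tau_1^2$ and vanishes as $\tau_1\to0^+$. For the second, applying the hypothesis $\theta(\mu\C)<\theta(\C)$ along the ray through $\C$ shows $\mu\mapsto\theta(\mu\C)$ is strictly decreasing on $(0,1]$, hence converges as $\mu\to0^+$ to a limit that is at least $\theta(\0)=\min\theta$; for every regularizer used in this literature ($\ell_1$, nuclear, elastic net, each with the diagonal-zero constraint) $\theta(\0)=0$, so $\theta(\mu\C)\to0$. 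Choosing $\mu$ and then $\tau_1$ small enough therefore drives $F(\hat\Z,\hat\C)$ below any prescribed $\varepsilon>0$ with $\ell$ held fixed, which proves the first assertion.

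For the second assertion, suppose the final layer of $\Phi_E$ sends the output $V$ of the earlier layers to $\psi(W V+\b\1^\top)$, with $\psi$ applied entrywise and positively homogeneous of degree $p\ne0$ (this also covers a purely linear final layer, $\psi=\mathrm{id}$, $p=1$). Replacing $(W,\b)$ by $(\beta W,\beta\b)$ with $\beta=\tau_1^{1/p}$ — a positive real whether $p>0$ or $p<0$ because $\tau_1\in(0,1)$ — leaves $V$ untouched and produces $\psi\bigl(\beta(W V+\b\1^\top)\bigr)=\beta^p\psi(W V+\b\1^\top)=\tau_1\,\Phi_E(\X,\W_e)$, so a suitable $\hat\W_e$ exists. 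For the decoder, if its first layer is an affine map $\z\mapsto W_d^1\z+\b_d^1$ followed by anything at all, replacing $W_d^1$ by $\tau_2^{-1}W_d^1$ gives $\Phi_D(\tau_2\z,\hat\W_d)=\Phi_D(\z,\W_d)$ for every input $\z$ — in particular for $\Z\C$ — and needs no homogeneity of the decoder. Both edits merely rescale linear/affine parameters, establishing the claim.

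The one genuinely delicate point I anticipate is the behaviour of $\theta(\mu\C)$ as $\mu\to0^+$: the hypotheses guarantee monotone decrease to a limit $\ge\theta(\0)$, but not equality with $\theta(\0)$ in complete generality without some continuity or coercivity of $\theta$ at the origin. I would handle this by emphasizing that the self-expressive residual already vanishes on its own, so $F$ is driven down to $\lambda\inf_{\mu>0}\theta(\mu\C)$ — the relevant degenerate value — and that this infimum equals $0$ for every concrete $\theta$ arising in these models. A minor bookkeeping remark is that the decoder condition in the hypothesis must be applied with input $\Z\C$ rather than $\Z$ itself; this is immediate under the affine-first-layer construction above and is the natural reading of the hypothesis in the general statement.
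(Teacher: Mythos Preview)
Your proposal is correct and follows essentially the same construction as the paper: rescale the encoder output by $\tau_1$, the self-expression matrix by $\mu$, and absorb the product $\tau_2=\mu\tau_1$ into the decoder's first-layer weights so that $\ell$ is exactly preserved while $F$ collapses; the second assertion is handled identically via $\beta=\tau_1^{1/p}$ on the final encoder affine map and $\tau_2^{-1}$ on the first decoder linear map. Your treatment is in fact slightly more careful than the paper's --- you explicitly flag both the $\Z\C$ versus $\Z$ input to the decoder hypothesis and the need for continuity of $\theta$ at the origin to conclude $\theta(\mu\C)\to 0$ --- whereas the paper simply asserts the limit without comment.
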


In practice, most of the previously cited studies employing the SEDSC model use networks which satisfy the conditions of Proposition \ref{prop:ill-posed} without regularization, and we note that such models can never be solved to global optimality (only approach it asymptotically) and are inherently ill-posed.  From this, one can conclude that Autoencoder Regularization by itself is often insufficient to prevent trival solutions to \eqref{eq:gen_obj}.  This specific issue can be easily fixed (although prior work often does not) if one ensures that the magnitude of the embedded representation is constrained to be larger than some minimum value - either through regularization/constraints placed on the network weights, such as using weight decay or coupling weights between the encoder and decoder, potentially a different choice of non-linearity on the last layer or the encoder, or explicit normalization of the embedded representation.  However, the question remains what geometries are promoted by \eqref{eq:basic_obj} even if the basic issue described above is corrected, which we explore next.

\subsection{The Effect of the Autoencoder Loss}
\label{sec:auto_enc}

Before presenting the remainder of our formal results, we first pause to discuss the effect of the autoencoder loss ($\ell$) in \eqref{eq:autoenc_obj}.  The use of the autoencoder loss is designed to ensure that the embedded representation $\Z$ retains sufficient information about the original data $\X$ so that the data can be reconstructed by the decoder, but we note that this \textit{does not} necessarily impose significant constraints on the geometric arrangement of the embedded points in $\Z$.  While there is potentially some constraint on the possible geometries of $\Z$ which is imposed by the choice of encoder/decoder architectures, reasonably expressive encoders and decoders can map a finite number of data points arbitrarily into the embedded space and still decode them accurately, provided the embedding of any two points is still distinct to within some $\epsilon$ perturbation (i.e., $\Phi_E$ is a one-to-one mapping on the points in $\X$). Further, because we only evaluate the mapping on a finite set of points $\X$, this is a much easier (and can be achieved with much simpler networks) than the well-known universal approximation regime of neural networks (which requires good approximation over the entire continuous data domain), and it is well documented that typical network architectures can optimally fit fairly arbitrary finite training data \citep{Zhang:ICLR17}.  

Given the above discussion, in our analysis we will first focus on the setting where the autoencoder is highly expressive.  In this regime, the encoder can select weight parameters to produce an essentially arbitrary choice of $\Z$ embedding, and as long as $\Phi_E(\X_i) \neq \Phi_E(\X_j), \forall \X_i \neq \X_j$ then the decoder can exactly recover the original data $\X$.  As a result, for almost any choice of encoder the autoencoder loss term, $\ell$, in \eqref{eq:autoenc_obj} can be exactly minimized, so the optimal network weights for the model in \eqref{eq:autoenc_obj} (and likewise \eqref{eq:gen_obj}) will be those which minimize $F(\Z,\C)$ (potentially to within some small $\epsilon$ perturbation so that $\Phi_E(\X)$ is a one-to-one mapping on the points in $\X$). As we already know from Prop \ref{prop:ill-posed}, this is ill-posed without some additional form of regularization, so in the following subsections we explore optimal solutions to $F(\Z,\C)$ when one optimizes over both $\C$ and the embedded representation $\Z = \Phi_E(\X,\W_e)$ subject to three different constraints on $\Z$: (1) Dataset Normalization $\|\Z\|^2_F \geq \tau$, (2) Channel/Batch Normalization $\|\Z^i\|_F \geq \tau \ \forall i$, and (3) Instance Normalization $\|\Z_i\|_F = \tau \ \forall i$.  Finally, after characterizing solutions to $F(\Z,\C)$, we give a specific example of a very simple (i.e., not highly expressive) family of architectures which can achieve these solutions, showing that the assumption of highly expressive networks is not necessary for these solutions to be globally optimal.

\subsection{Dataset and Batch/Channel Normalization}

We will first consider the Dataset and Batch/Channel Normalization schemes, which will both result in very similar optimal solutions for the embedded data geometry.  Recall, that this considers when the entire dataset is constrained to have a norm greater than some minimum value in the embedded space (Dataset Normalization) or when the norms of the rows of the dataset are constrained to have a norm greater than some minimum value (Batch/Channel Normalization).   We note that the latter case is very closely related to batch normalization \citep{Ioffe:ICML2015}, which requires that each channel/feature (in this case a row of the embedded representation) to have zero mean and constant norm in expectation over the draw of a mini-batch.  Additionally, while we do not explicitly enforce a zero-mean constraint, we will see that optimal solutions will exist which have zero mean.  Now, if one optimizes $F(\Z,\C)$ jointly over $(\Z,\C)$ subject to the above constraint(s) on $\Z$, then the following holds:
\begin{theorem}
\label{thm:dataset_opt}
Consider the following optimization problems which jointly optimize over $\Z$ and $\C$:
\begin{equation}
\mathrm{(P1)} \ \ \ \min_{\C,\Z} \left\{ F(\Z,\C) \st \|\Z\|_F^2 \geq \tau \right\} \ \ \ \ \ \mathrm{(P2)} \ \ \ \min_{\C,\Z} \left\{ F(\Z,\C) \st \|\Z^i\|^2_F \geq \tfrac{\tau}{d} \ \ \forall i \right\}
\end{equation}
Then optimal values for $\C$ for both $\mathrm{(P1)}$ and $\mathrm{(P2)}$ are given by
\begin{align}
\label{eq:C_opt}
\C^* \in \argmin_\C \tfrac{1}{2} \sigma_{\min}^2(\C-\I) \tau + \lambda \theta(\C).
\end{align}
Moreover, for any optimal $\C^*$, let $r$ be the multiplicity of the smallest singular value of $\C^*-\I$ and let $\Q \in \Re^{N \times r}$ be an orthonormal basis for the subspace spanned by the left singular vectors associated with the smallest singular value of $\C^*-\I$.  Then we have that optimal values for $\Z$ are given by:
\begin{align}
\Z^* \in \{\B \Q^\top \!\! : \B \in \Re^{d \times r} \cap \mathcal{B} \}, \ \ \mathcal{B} = \begin{cases} \begin{cases} \{\B : \|\B\|_F^2 = \tau \} & \sigma_{\min}(\C^*-\I) > 0 \\  \{ \B : \|\B\|_F^2 \geq \tau \} & \sigma_{\min} (\C^*-\I) = 0 \end{cases} & \mathrm{(P1)} \\
\begin{cases} \{  \B : \|\B^i\|_F^2 = \tfrac{\tau}{d}, \forall i \} & \sigma_{\min}(\C^*-\I) > 0 \\  \{ \B : \|\B^i\|_F^2 \geq \tfrac{\tau}{d}, \forall i \} & \sigma_{\min} (\C^*-\I) = 0 \end{cases} & \mathrm{(P2)} \end{cases}
\end{align}
\end{theorem}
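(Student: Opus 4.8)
The plan is to eliminate $\Z$ by solving the inner minimization over $\Z$ in closed form for each fixed $\C$, thereby reducing both $\mathrm{(P1)}$ and $\mathrm{(P2)}$ to the scalar‑regularized problem \eqref{eq:C_opt} over $\C$ alone, and then to recover the description of the optimal $\Z$ from the equality conditions in that reduction.

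\textbf{The reduction and a lower bound.} Using the identity already recorded in \eqref{eq:basic_obj}, $\|\Z\C-\Z\|_F^2 = \langle \Z^\top\Z,\,(\C-\I)(\C-\I)^\top\rangle$. Write the auxiliary Gram matrices $G := \Z^\top\Z \succeq \0$ and $M := (\C-\I)(\C-\I)^\top \succeq \0$, and note that the columns of $\Q$ are precisely the eigenvectors of $M$ with eigenvalue $\sigma_{\min}^2(\C-\I)$ (its smallest), so $M \succeq \sigma_{\min}^2(\C-\I)\I$. Then for every feasible $\Z$,
\begin{equation*}
\langle G, M\rangle = \sigma_{\min}^2(\C-\I)\tr(G) + \langle G,\; M - \sigma_{\min}^2(\C-\I)\I\rangle \;\geq\; \sigma_{\min}^2(\C-\I)\tr(G) \;\geq\; \sigma_{\min}^2(\C-\I)\tau ,
\end{equation*}
the middle step because both matrices in the inner product are PSD, and the last because $\tr(G)=\|\Z\|_F^2$, which is $\geq\tau$ directly in $\mathrm{(P1)}$ and $=\sum_i\|\Z^i\|_F^2\geq d\cdot(\tau/d)=\tau$ in $\mathrm{(P2)}$. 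Hence $F(\Z,\C)\geq \tfrac12\sigma_{\min}^2(\C-\I)\tau+\lambda\theta(\C)$ for every feasible pair, so the optimal value of each of $\mathrm{(P1)}$, $\mathrm{(P2)}$ is at least $\min_\C\{\tfrac12\sigma_{\min}^2(\C-\I)\tau+\lambda\theta(\C)\}$.

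\textbf{Attainment and the form of the minimizers.} For any $\C^*$ in the $\argmin$ of \eqref{eq:C_opt}, take $\Z=\B\Q^\top$: then $G=\Q\B^\top\B\Q^\top$ has range in $\mathrm{col}(\Q)$, and since $M\Q=\sigma_{\min}^2(\C^*-\I)\Q$ we get $\langle G,M\rangle=\sigma_{\min}^2(\C^*-\I)\|\B\|_F^2$, while $\|\Z\|_F^2=\|\B\|_F^2$ and $\|\Z^i\|_F^2=\|\B^i\|_F^2$ (orthonormal columns of $\Q$). Choosing $\B$ feasible with $\|\B\|_F^2=\tau$ — arbitrary of that norm for $\mathrm{(P1)}$, and for $\mathrm{(P2)}$ any $\B$ whose rows all have norm $\sqrt{\tau/d}$, which exists since $r\geq1$ — attains the lower bound; so both problems have optimal value $\min_\C\{\tfrac12\sigma_{\min}^2(\C-\I)\tau+\lambda\theta(\C)\}$ and every such $\C^*$ is optimal. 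Conversely, at any optimal $(\Z,\C)$ all displayed inequalities are tight: tightness of the first forces $\langle G,\,M-\sigma_{\min}^2(\C-\I)\I\rangle=0$, and since both factors are PSD this gives $\mathrm{col}(G)\subseteq\ker(M-\sigma_{\min}^2(\C-\I)\I)=\mathrm{col}(\Q)$; as $\mathrm{col}(\Z^\top\Z)$ is the row space of $\Z$, this says exactly $\Z=\B\Q^\top$ for some $\B\in\Re^{d\times r}$, and it also forces $\C$ into the $\argmin$ of \eqref{eq:C_opt}. Finally, when $\sigma_{\min}(\C^*-\I)>0$, tightness of $\tr(G)\geq\tau$ forces $\|\B\|_F^2=\tau$ for $\mathrm{(P1)}$, and for $\mathrm{(P2)}$ forces $\|\B^i\|_F^2=\tau/d$ for all $i$ (the row norms cannot exceed $\tau/d$ without the sum exceeding $\tau$); when $\sigma_{\min}(\C^*-\I)=0$ the term $\langle G,M\rangle$ vanishes for every $\Z=\B\Q^\top$, so only the feasibility constraints $\|\B\|_F^2\geq\tau$ (resp. $\|\B^i\|_F^2\geq\tau/d$) remain — which is exactly the set $\mathcal{B}$ in the statement. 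The ambient dimension enters only through $\B\in\Re^{d\times r}$, so the rank‑$\le d$ constraint implicit in $\Z\in\Re^{d\times N}$ is automatic.

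\textbf{Main obstacle.} Most of the above is bookkeeping — the matrix inner‑product manipulations, summing the row constraints for $\mathrm{(P2)}$, and the trivial explicit construction of an optimal $\Z$. The one step that requires care is the equality case of the PSD inequality (from $\langle G,M'\rangle=0$ with $G,M'\succeq\0$ to $\mathrm{col}(G)\subseteq\ker(M')$) together with correctly identifying $\ker\!\big(M-\sigma_{\min}^2(\C-\I)\I\big)$ with $\mathrm{col}(\Q)$, the span of the left singular vectors of $\C-\I$ at its smallest singular value. The comparison between $\mathrm{(P1)}$ and $\mathrm{(P2)}$ is the other delicate point: $\mathrm{(P2)}$ has the strictly smaller feasible set, so one must exhibit a feasible $\Z$ with equal‑norm rows achieving the $\mathrm{(P1)}$ lower bound, which is where $r\geq1$ is used and is what makes the two optimal values coincide.
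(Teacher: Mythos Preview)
Your proof is correct and follows essentially the same approach as the paper's: both eliminate $\Z$ by solving the inner minimization for fixed $\C$, recognizing that the quadratic $\tr(\Z(\C-\I)(\C-\I)^\top\Z^\top)$ is minimized (subject to the norm constraints) when the rows of $\Z$ lie in the eigenspace of $(\C-\I)(\C-\I)^\top$ associated with its smallest eigenvalue. The paper phrases this as a row-by-row Rayleigh-quotient argument (fix the row norms, optimize directions), whereas you package it as the PSD trace inequality $\langle G,M\rangle\geq\sigma_{\min}^2(\C-\I)\tr(G)$ together with its equality case; these are the same computation in slightly different language, and your treatment of the equality case is a touch more explicit about why \emph{every} optimal $\Z$ must have the stated form.
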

From the above result, one notes from \eqref{eq:C_opt} that optimizing $F(\Z,\C)$ jointly over both $\Z$ and $\C$ is equivalent to finding a $\C$ which minimizes a trade-off between the minimum singular value of $\C-\I$ and the regularization $\theta(\C)$.  Further, we note that if such an optimal $\C$ results in the minimum singular value of $\C-\I$ having a multiplicity of 1, then this implies that every data point in $\Z$ will simply be a scaled version of the same point. Obviously, such an embedding is not useful for subspace clustering. Characterizing optimal solutions to \eqref{eq:C_opt} is somewhat complicated in the general case due to the fact that the smallest singular value is a concave function of a matrix and $\theta$ is typically chosen to be a convex regularization function, resulting in the minimization of a convex+concave function.  Instead, we will focus on the most commonly used choices of regularization, starting with $\theta_{SSC}(\C) = \|\C\|_1 + \delta(\diag(\C)=\0)$, where we derive the optimal solution in the case where $\sigma_{\min}(\C-\I) =0$.  We note that this corresponds to the case where $\Z^*=\Z^*\C^*$ which one typically obtains as $\lambda$ in \eqref{eq:basic_obj} is taken to be small.

\begin{theorem}
\label{thm:opt_ssc}
Optimal solutions to the problems
\begin{align}
\mathrm{(P1)} \ \ \ &\min_{\Z,\C} \|\C\|_1 \st \diag(\C)=\0, \ \Z=\Z \C, \ \|\Z\|_F^2 \geq \tau \\ 
\mathrm{(P2)} \ \ \ &\min_{\Z,\C} \|\C\|_1 \st \diag(\C)=\0, \ \Z=\Z \C, \ \|\Z^i\|_F^2 \geq \tfrac{\tau}{d} \ \ \forall i 
\end{align}
are characterized by the set
\begin{equation}
(\Z^*,\C^*) \in \left\{ \begin{bmatrix} \z & \z & \0_{d \times N-2} \end{bmatrix} \P \right\} \times \left\{ \P^\top \begin{bmatrix} 0 & 1 & 0  & \cdots & 0 \\ 1 & 0 & 0 & \cdots & 0 \\ 0 & 0 & 0 & \cdots & 0  \\ \vdots & \vdots & \vdots &  \ddots \\ 0 & 0 & 0 & & 0 \end{bmatrix} \P  \right\},
\end{equation}
where $\P \in \Re^{N \times N}$ is an arbitrary signed-permutation matrix 
and $\z \in \Re^d$ is an arbitrary vector which satisfies $\|\z\|_F^2 \geq \tau/2$ for $\mathrm{(P1)}$ and $\z_i^2 \geq \tau/(2d), \forall i \in [1,d]$ for $\mathrm{(P2)}$.

\end{theorem}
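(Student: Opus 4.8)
The plan is to apply Theorem~\ref{thm:dataset_opt} with $\theta = \theta_{SSC}$ in the regime $\sigma_{\min}(\C^*-\I)=0$, which immediately reduces the problem to characterizing the minimizers of \eqref{eq:C_opt}. Since we are forcing $\sigma_{\min}(\C^*-\I)=0$ (equivalently $\Z^*=\Z^*\C^*$), the term $\tfrac12 \sigma_{\min}^2(\C-\I)\tau$ vanishes and we are left minimizing $\|\C\|_1$ subject to $\diag(\C)=\0$ and the constraint that $\C-\I$ is singular, i.e., $\C$ has $1$ as an eigenvalue (more precisely that $\C-\I$ has a nontrivial left null space so that a valid nonzero $\Z$ exists). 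So the core combinatorial claim to establish is: \emph{among all matrices $\C \in \Re^{N\times N}$ with zero diagonal such that $1 \in \mathrm{spec}(\C)$ (with a compatible left eigenvector), the minimum of $\|\C\|_1$ is $2$, achieved exactly by the signed-permutation conjugates of the displayed $2\times2$ swap block.}

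\textbf{Step 1: A lower bound of $2$ on $\|\C\|_1$.} Suppose $\C$ has zero diagonal and $\Z = \Z\C$ for some $\Z \neq \0$ (which is what the normalization constraint $\|\Z\|_F^2 \geq \tau > 0$ forces). Pick a nonzero left null vector $\v$ of $\C - \I$, i.e., $\v^\top \C = \v^\top$. Let $k = \argmax_j |\v_j|$; WLOG $|\v_k| = 1$. Then reading off the $k$-th coordinate of $\v^\top \C = \v^\top$ and using $\C_{k,k}=0$ gives $\v_k = \sum_{j\neq k} \v_j \C_{j,k}$, so $1 = |\v_k| \leq \sum_{j \neq k} |\v_j|\,|\C_{j,k}| \leq \sum_{j\neq k}|\C_{j,k}| = \|\C_k\|_1$. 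That shows \emph{some} column has $\ell_1$ norm at least $1$, which is not yet enough. To get $2$, I would argue that if $\|\C\|_1 < 2$ then $\|\C - \I\|_\infty < 1$ in the appropriate induced sense... actually the cleaner route: one also needs a \emph{right} relation. Here the subtlety is that $\Z = \Z\C$ only gives a left eigenvector relation, not a right one, so I will instead bound using the structure of \emph{two} indices. The key refinement: let $\v$ be a left null vector of $\C-\I$; among indices with $\v_j \neq 0$, there must be at least two (a single nonzero coordinate $\v_k$ would force $\v_k = \v_k \C_{k,k} = 0$). Normalize so $\max_j|\v_j| = |\v_k| = 1$ and let $\ell \neq k$ be a second index with $\v_\ell \neq 0$. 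The $k$-th and $\ell$-th equations give $1 = |\sum_{j \neq k}\v_j \C_{j,k}| \leq \sum_{j\neq k}|\C_{j,k}|$ and $|\v_\ell| = |\sum_{j\neq \ell}\v_j \C_{j,\ell}| \leq \sum_{j \neq \ell}|\C_{j,\ell}|$. Summing and using $|\C_{j,k}| + |\C_{j,\ell}| \le \|\C\|_1$-type accounting is delicate; the honest approach is to combine all coordinates: $\sum_j |\v_j| = \sum_j |\sum_{i \neq j}\v_i \C_{i,j}| \le \sum_j \sum_{i\neq j}|\v_i||\C_{i,j}| = \sum_i |\v_i|\sum_{j\neq i}|\C_{i,j}| = \sum_i |\v_i|\,\|\C_i^{\mathrm{row}}\|_1$. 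If every row of $\C$ had $\ell_1$ norm $< 1$ this would give $\sum|\v_j| < \sum |\v_i|$, a contradiction, so $\C$ has a row of $\ell_1$ norm $\ge 1$; symmetrizing with the fact that $\Z = \Z\C$ also constrains columns... I will need to think about whether the right lower bound argument uses rows, columns, or a two-sided eigenvector pair. \textbf{This is the step I expect to be the main obstacle}: getting the tight constant $2$ rather than $1$, and correctly handling that self-expression gives only a one-sided relation.

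\textbf{Step 2: The case of equality, i.e., identifying the minimizers.} Once $\|\C\|_1 = 2$ is forced, I would trace back through the inequality chain in Step~1: equality in $\sum_j |\v_j| \le \sum_i |\v_i| \|\C_i^{\mathrm{row}}\|_1$ combined with $\|\C\|_1 = 2$ and some normalization of $\v$ pins down both the support of $\v$ (exactly two indices, say $\{a,b\}$, with $|\v_a| = |\v_b|$) and the support of $\C$ (only the entries $\C_{a,b}$ and $\C_{b,a}$, each of modulus $1$ and with signs making $\v$ a genuine eigenvector, which forces $\C_{a,b}\C_{b,a} = 1$, hence both $+1$ or both $-1$). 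Up to a signed permutation $\P$ relabeling $\{a,b\} \to \{1,2\}$ and absorbing signs, this is exactly the displayed swap matrix. Then Theorem~\ref{thm:dataset_opt} hands over the corresponding $\Z^*$: here $\C^* - \I$ has smallest singular value $0$ with multiplicity $r=1$ (the null space is spanned by $(1,1,0,\dots,0)^\top$ or $(1,-1,0,\ldots)^\top$ depending on sign — after accounting for $\P$, by a single vector), so $\Q$ is that single unit vector and $\Z^* = \b\,\Q^\top$ ranges over rank-one matrices of the form $[\z\ \ \pm\z\ \ \0 \cdots]\P$ with the Frobenius/row-norm bound from $\mathcal B$ translating into $\|\z\|_F^2 \ge \tau/2$ for (P1) and $\z_i^2 \ge \tau/(2d)$ for (P2) — the factor $1/2$ because the norm of $\Z^*$ is split between its two nonzero columns. \textbf{Step 3: Verify feasibility and optimality of the claimed set} — check that each claimed $(\Z^*,\C^*)$ indeed satisfies $\Z^* = \Z^*\C^*$, has zero diagonal, meets the norm constraint, and attains objective value $2$, so the characterization is exact in both directions. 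The main care needed throughout is bookkeeping the signed-permutation freedom and the sign of the off-diagonal block (the $\diag(\C)=\0$ constraint is what rules out the even cheaper $\C = \I$ candidate, and is essential to the whole argument).
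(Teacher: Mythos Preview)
Your overall framework---reduce via Theorem~\ref{thm:dataset_opt} to the case $\sigma_{\min}(\C-\I)=0$, establish the lower bound $\|\C\|_1\ge 2$, then trace the equality case---is exactly the paper's strategy. The reduction in your opening paragraph and the feasibility check in Step~3 are fine, and your observation that any left null vector of $\C-\I$ must have at least two nonzero entries (since a single-entry $\q$ would force a nonzero diagonal) also appears verbatim in the paper's proof.

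The genuine gap is your Step~1, which you correctly flag. Your row-sum inequality $\sum_j|\v_j|\le \sum_i|\v_i|\,\|\C^i\|_1$ only yields that \emph{some} row has $\ell_1$-norm $\ge 1$, and there is no symmetry to exploit: self-expression $\Z=\Z\C$ gives only a left null vector, not a right one, so the ``symmetrizing'' move you allude to is not available. The paper closes this gap by a two-stage argument that you are missing. First, for a \emph{fixed} left null vector $\q$, it computes $\min_\C\{\|\C\|_1:\q^\top\C=\q^\top,\ \diag(\C)=\0\}$ exactly via LP duality: the dual is $\max_{\Lambda}\{\langle\Lambda,\q\rangle:|\q_i\Lambda_j|\le 1\ \forall i\neq j\}$, which is separable in the entries of $\Lambda$ and has the closed-form optimum
\[
\frac{|\q_{(1)}|}{|\q_{(2)}|}+\frac{1}{|\q_{(1)}|}\sum_{k\ge 2}|\q_{(k)}|,
\]
where $|\q_{(1)}|\ge|\q_{(2)}|\ge\cdots$ are the sorted magnitudes. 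Second, minimizing this scalar expression over $\q$ (after normalizing $|\q_{(1)}|=1$) reduces to $\min_{1\ge p_2\ge\cdots\ge 0}\ p_2^{-1}+p_2+\sum_{k\ge 3}p_k$, which is $2$ at $p_2=1$, $p_k=0$ for $k\ge 3$. This simultaneously gives the tight constant and the support structure of the optimal $\q$, from which the structure of $\C^*$ follows by a short $\ell_1$-budget argument (the paper uses $\|\q^\top\C^*\|_1=\|\q\|_1=2=\|\C^*\|_1$ to pin the nonzeros of $\C^*$ to two rows, then the zero-diagonal constraint forces the $2\times 2$ swap). Your Step~2 sketch of the equality case is morally right but would need this structural information about $\q$ as input; as written, you have not yet established that the optimal $\q$ has support of size exactly two.
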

From the above result, we have shown that if $\Z$ is normalized to have a lower-bounded norm on either the entire embedded representation or for each row, then the effect of the $F(\Z,\C)$ loss will be largely similar to the situation described by Proposition \ref{prop:ill-posed} in the sense that the loss will still attempt to push all of the points to 0 with the exception of two points, which will be copies of each other (potentially to within a sign-flip).   Again, the optimal embedded representation is clearly ill-posed for clustering since all but two of the points are trivially driven towards 0 in the embedded space.

In addition, we also present a result similar to Theorem \ref{thm:opt_ssc} when $\C$ is regularized by any {Schatten-$p$} norm, which includes two other popular choices of regularization that have appeared in the literature -- the Frobenius norm $\theta(\C)= \|\C\|_F$ (for $p=2$) and the nuclear norm $\theta(\C) = \|\C\|_*$ (for $p=1$) -- as special cases.  
\begin{theorem}
\label{thm:schatten}
Optimal solutions to the problems
\begin{align}
\mathrm{(P1)} \ \ \ &\min_{\Z,\C} \|\C\|_\Sp \st \Z=\Z \C, \ \|\Z\|_F^2 \geq \tau \\ 
\mathrm{(P2)} \ \ \ &\min_{\Z,\C} \|\C\|_\Sp \st \Z=\Z \C, \ \|\Z^i\|_F^2 \geq \tfrac{\tau}{d} \ \ \forall i, 
\end{align}
where $\|\C\|_\Sp$ is any Schatten-$p$ norm on $\C$, are characterized by the set
\begin{equation}
\label{eq:zc_schatten}
(\Z^*,\C^*) \in \left\{ (\z \q^\top) \times (\q \q^\top) : \q \in \Re^N, \|\q\|_F=1 \right\}
\end{equation}
where $\z \in \Re^d$ is an arbitrary vector which satisfies $\|\z\|_F^2 \geq \tau$ for $\mathrm{(P1)}$ and $\z_i^2 \geq \frac{\tau}{d}, \ \forall i$ for $\mathrm{(P2)}$.
\end{theorem}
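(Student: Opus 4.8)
The plan is to first pin down the optimal objective value as $1$ by exhibiting the explicit family in \eqref{eq:zc_schatten} as feasible minimizers, and then to show that \emph{any} optimal pair must have exactly this form. For the construction, given unit $\q$ and any $\z$ satisfying the stated norm bound, set $\Z = \z\q^\top$ and $\C = \q\q^\top$; then $\Z\C = \z(\q^\top\q)\q^\top = \z\q^\top = \Z$, so the self-expression constraint holds; the norm constraint holds because $\|\z\q^\top\|_F = \|\z\|_F$ (resp. $\|\z_i\q^\top\|_F = |\z_i|$ for $\mathrm{(P2)}$); and $\C=\q\q^\top$ is rank-one with its single nonzero singular value equal to $\|\q\|_F^2 = 1$, so $\|\C\|_\Sp = 1$ for every Schatten-$p$ norm with $p \in [1,\infty)$.

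Next I would prove the matching lower bound. Since $\tau > 0$, any feasible $\Z$ is nonzero, so it has a nonzero row $\Z^i$, and $\Z = \Z\C$ gives $\C^\top (\Z^i)^\top = (\Z^i)^\top$; hence $\C^\top$, and therefore $\C$, has $1$ as an eigenvalue. Consequently the spectral radius of $\C$ is at least $1$, so $\sigma_{\max}(\C) \geq \rho(\C) \geq 1$, and since any Schatten-$p$ norm dominates the operator norm ($\|\C\|_\Sp = \|\sigma(\C)\|_p \geq \|\sigma(\C)\|_\infty = \sigma_{\max}(\C)$ for $p \geq 1$), we get $\|\C\|_\Sp \geq 1$. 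Combined with the construction, the optimal value of both $\mathrm{(P1)}$ and $\mathrm{(P2)}$ equals $1$.

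Then I would characterize the equality case. If $(\Z^*,\C^*)$ is optimal, then $\|\C^*\|_\Sp = 1$, while the bound above forces $\sigma_{\max}(\C^*) \geq 1$ and $\|\C^*\|_\Sp \geq \sigma_{\max}(\C^*)$; hence $\sigma_{\max}(\C^*) = 1$, and because the $\ell_p$ norm of the singular values equals its largest entry only when all other entries vanish (this is where $p < \infty$ is used), $\C^*$ has rank one: $\C^* = \u\v^\top$ with $\|\u\|_F = \|\v\|_F = 1$. Its unique nonzero eigenvalue is $\v^\top\u$, which must equal $1$ by the eigenvalue argument; equality in Cauchy--Schwarz then forces $\u = \v =: \q$, i.e. $\C^* = \q\q^\top$ with $\|\q\|_F = 1$. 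Finally $\Z^* = \Z^*\C^* = (\Z^*\q)\q^\top$, so setting $\z := \Z^*\q$ gives $\Z^* = \z\q^\top$, and the feasibility constraint $\|\Z^*\|_F^2 \geq \tau$ (resp. $\|\Z^{*i}\|_F^2 \geq \tfrac{\tau}{d}$) translates exactly into $\|\z\|_F^2 \geq \tau$ (resp. $\z_i^2 \geq \tfrac{\tau}{d}$ for all $i$), since $\|\q\|_F = 1$. This shows every optimal pair lies in the claimed set, and the converse inclusion is already covered by the construction.

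The main obstacle I anticipate is the equality-case analysis of the Schatten norm: establishing cleanly that $\|\C^*\|_\Sp = \sigma_{\max}(\C^*)$ forces $\C^*$ to be rank one — which genuinely requires $p < \infty$ (for the operator norm the conclusion is false, and it may be worth flagging this restriction explicitly) — and then correctly coupling this singular-value structure with the eigenvalue-$1$ constraint, via the Cauchy--Schwarz equality condition, to conclude that $\C^*$ is a rank-one orthogonal projector rather than a general rank-one matrix. Everything else — the explicit construction, the spectral-radius bound $\rho(\C) \le \sigma_{\max}(\C)$, and propagating the norm constraints through $\Z^* = \z\q^\top$ — is routine.
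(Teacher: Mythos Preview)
Your proof is correct and takes a genuinely different route from the paper's. The paper reduces to the constrained problem $\min_{\C,\q}\|\C\|_\Sp$ subject to $\q^\top(\C-\I)=\0$, $\q\neq\0$, forms the Lagrangian in $\C$, and solves the resulting dual $\max_\Lambda\{\langle\Lambda,\q\rangle:\|\q\Lambda^\top\|_\Sp^\circ\le1\}$; observing that the dual norm of any Schatten-$p$ norm coincides with the Frobenius norm on rank-one matrices, they get optimal value $1$ with $\Lambda^*=\q/\|\q\|_F^2$, and then invoke primal optimality plus a Cauchy--Schwarz equality argument to force $\C^*=\q(\Lambda^*)^\top$. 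By contrast, you bypass duality entirely: you read off the eigenvalue $1$ of $\C$ directly from any nonzero row of $\Z$, chain $\rho(\C)\le\sigma_{\max}(\C)\le\|\C\|_\Sp$ for the lower bound, and then use the $\ell_p$-vs-$\ell_\infty$ equality case on the singular values to force rank one, followed by Cauchy--Schwarz on $\v^\top\u=1$ to get $\u=\v$. Your argument is shorter, fully elementary, and makes the dependence on $p<\infty$ transparent---a point the paper's proof and statement do not isolate (and indeed, as you note, the characterization fails for the operator norm, e.g.\ $\C=\I$ is feasible with $\|\C\|_{\mathcal S_\infty}=1$). The paper's approach has the advantage of fitting into the same Lagrangian template it uses for the $\ell_1$ case in Theorem~\ref{thm:opt_ssc}, giving a uniform treatment across regularizers.
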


Again, note that this is obviously not a good geometry for successful spectral clustering as all the points in the dataset are simply arranged on a single line and the optimal $\C$ is a rank-one matrix. 

\subsection{Instance Normalization}

To explicitly prevent the case where most of the points in the embedded space are trivially driven to 0 as in the prior two normalization schemes, another potential normalization strategy which has been proposed for \eqref{eq:gen_obj} is to use Instance Normalization \citep{Peng:arxiv17}, where the $\ell_2$ norm of each embedded data point is constrained to be equal to some constant.  Here again we will see that this results in somewhat trivial data geometries.  Specifically, we will again focus on the choice of the $\theta_{SSC}(\C)$ regularization function when we have exact equality, $\Z=\Z \C$, for simplicity of presentation.  From this we have the following result:
\begin{theorem}
\label{thm:inst_norm}
Optimal solutions to the problem
\begin{equation}
\min_{\Z,\C} \|\C\|_1 \st \diag(\C)=\0, \ \ \Z=\Z \C, \ \ \|\Z_i\|_F^2 = \tau \ \ \forall i
\end{equation}
must have the property that for any column in $\Z^*$, $\Z^*_i$, there exists another column, $\Z^*_j \ (i \neq j)$, such that $\Z^*_i = s_{i,j} \Z_j^*$ where $s_{i,j} \in \{-1,1\}$.  Further, $\|\C^*_i\|_1 = 1 \ \forall i$ and $\C_{i,j} \neq 0 \implies \Z^*_i = \pm \Z^*_j$.
\end{theorem}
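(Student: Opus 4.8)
The plan is to pin down the optimal objective value first, and then read off the geometry from the tightness of the bounds used to establish it.

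\emph{Lower bound.} For any feasible $(\Z,\C)$ and any column index $i$, combining $\Z=\Z\C$ with $\diag(\C)=\0$ gives $\Z_i=\sum_{j\ne i}\C_{j,i}\Z_j$; taking the Euclidean norm, applying the triangle inequality, and using $\|\Z_j\|_F=\sqrt{\tau}$ for every $j$ yields $\sqrt{\tau}=\|\Z_i\|_F\le\sqrt{\tau}\sum_{j\ne i}|\C_{j,i}|=\sqrt{\tau}\,\|\C_i\|_1$, so $\|\C_i\|_1\ge 1$ and hence $\|\C\|_1=\sum_i\|\C_i\|_1\ge N$. \emph{Attainment.} Assuming $N\ge 2$ (for $N=1$ the feasible set is empty), fix any $\z$ with $\|\z\|_F^2=\tau$, set every column of $\Z$ equal to $\z$, and let $\C$ be the cyclic-shift matrix with $\C_{i+1,i}=1$ (indices taken mod $N$) and all other entries $0$; this is feasible with $\|\C\|_1=N$. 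Therefore the optimal value is exactly $N$, and for any optimal $(\Z^*,\C^*)$ every inequality above must be an equality; in particular $\|\C^*_i\|_1=1$ for all $i$, which is the second claim.

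For the geometric claims, fix an optimal $(\Z^*,\C^*)$ and an index $i$. Equality in the triangle inequality $\|\sum_{j\ne i}\C^*_{j,i}\Z^*_j\|_F=\sum_{j\ne i}|\C^*_{j,i}|\,\|\Z^*_j\|_F$ forces, by the standard equality condition for the Euclidean norm, all nonzero vectors $\C^*_{j,i}\Z^*_j$ to be nonnegative multiples of a single unit vector $\mathbf{u}$ (depending on $i$). Using $\|\Z^*_j\|_F=\sqrt{\tau}$, this gives $\Z^*_j=\sgn(\C^*_{j,i})\sqrt{\tau}\,\mathbf{u}$ whenever $\C^*_{j,i}\ne 0$, and summing over $j$ yields $\Z^*_i=\sqrt{\tau}\,\|\C^*_i\|_1\,\mathbf{u}=\sqrt{\tau}\,\mathbf{u}$. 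Hence $\C^*_{j,i}\ne 0$ implies $\Z^*_i=\sgn(\C^*_{j,i})\Z^*_j=\pm\Z^*_j$, which is the third claim; and since $\|\C^*_i\|_1=1>0$ there is at least one $j\ne i$ with $\C^*_{j,i}\ne 0$, establishing the first claim with $s_{i,j}=\sgn(\C^*_{j,i})\in\{-1,1\}$.

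The routine parts are the norm and triangle-inequality bookkeeping; the one step requiring genuine care is the equality characterization of the triangle inequality together with careful sign tracking through the identity $\C^*_{j,i}\Z^*_j=|\C^*_{j,i}|\sqrt{\tau}\,\mathbf{u}$, which is precisely what upgrades ``collinear'' to the sharp ``$\pm$'' conclusion. One should also keep in mind that $\mathbf{u}$ depends on $i$, so what is being proved is the per-column statement in the theorem rather than a global collinearity of the entire embedding.
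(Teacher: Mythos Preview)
Your proof is correct and complete, but it takes a genuinely different route from the paper. The paper first establishes a separate lemma via Lagrangian duality: for fixed $\Z$ with equal-norm columns and a target $\z$ of the same norm, the minimum of $\|\c\|_1$ subject to $\z=\Z\c$ equals $1$ when some column of $\Z$ is $\pm\z$ and is strictly greater than $1$ otherwise. It then proves the theorem by contradiction, applying the lemma column-by-column to show that any feasible pair with a ``distinct'' column must have $\|\C\|_1>N$. Your argument bypasses duality entirely: you get the per-column lower bound $\|\C_i\|_1\ge 1$ directly from the triangle inequality, and then read off the geometry from the equality case of that same inequality. This is shorter and more elementary; the price is that you do not isolate the paper's lemma as a standalone fact about the $\ell_1$ subproblem, though that fact is not used elsewhere in the paper. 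One small remark: the theorem is phrased with $\C_{i,j}\neq 0$ while your equality-case analysis naturally yields the implication for $\C^*_{j,i}\neq 0$; since the conclusion $\Z^*_i=\pm\Z^*_j$ is symmetric in $i$ and $j$, relabeling handles this, but it is worth stating that swap explicitly for clarity.
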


The above result is quite intuitive in the sense that because a given point cannot use itself in its representation (due to the $\diag(\C)=\0$ constraint), the next best thing is to have an exact copy of itself in another column.  While this result is more conducive to successful clustering in the sense that points which are close in the embedded space are encouraged to `merge' into a single point, there are still numerous pathological geometries that can result.  Specifically, there is no constraint on the number of `distinct' points in the representation (i.e., the number of vectors which are not copies of each other to within a sign flip), other than it must be less than $N/2$.  As a result, the optimal $\C^*$ matrix can also contain an arbitrary number (in the range $[1, N/2]$) of connected components in the affinity graph, resulting in somewhat arbitrary spectral clustering. 

\myparagraph{Example of Degenerate Geometry with Simple Networks}
In section \ref{sec:auto_enc} we discussed how if the encoder/decoder are highly expressive then the optimal embedding will approach the solutions we give in our theoretical analysis.  Here we show that trivial embeddings can also occur with relatively simple encoders/decoders.  Specifically, consider basic encoder/decoder architectures which consist of two affine mappings with a ReLU non-linearity (denoted as $(\cdot)_+$) on the hidden layer:
\begin{align}
\begin{split}
\label{eq:autoenc_example}
\Phi_E(\mathbf{x},\W_e) = \Wb_e^2 ( \Wb_e^1 \mathbf{x} + \b_e^1 )_+ + \b_e^2 \ \ \ \ \ \ \ \ \
\Phi_D(\mathbf{z},\W_d) = \Wb_d^2 ( \Wb_d^1 \mathbf{z} + \b_d^1)_+ + \b_d^2 
\end{split}
\end{align}
where the linear operators ($\Wb$ matrices) can optionally be constrained (for example for convolution operations $\Wb$ could be required to be a Toeplitz matrix).  Now if we have that the embedded dimension $d$ is equal to the data dimension $d_x$ we will say that linear operators can \textbf{express identity on $\X$} if there exists parameters $(\Wb^2,\Wb^1)$ such that $\Wb^2\Wb^1 \X= \X$.  Note that if the architectures in \eqref{eq:autoenc_example} are fully-connected this implies that for a general $\X$ the number of hidden units is greater than or equal to $d=d_x$ (which can be even smaller if $\X$ is low-rank), while if the linear operators are convolutions, then we only need one convolutional channel (with the kernel being the delta function).  Within this setting we have the following result:
\begin{proposition}
\label{prop:arch_exp}
Consider encoder and decoder networks with the form given in \eqref{eq:autoenc_example}.  Then,  given any dataset $\X \in \Re^{d_x \times N}$ where the linear operators in both the encoder/decoder can express identity on $\X$ and any $\tau > 0$  there exist network parameters $(\W_e, \W_d)$ which satisfy the following:
{
\setdefaultleftmargin{0pt}{}{}{}{}{}
\begin{enumerate}
\item Embedded points are arbitrarily close: $\| \Phi_E(\X_i,\W_e) \! - \! \Phi_E(\X_j, \W_e) \| \leq \epsilon$ $\forall(i,j)$ and $\forall \epsilon > 0$.
\vspace{-2mm}
\item Embedded points have norm arbitrarily close to $\tau$: $| \| \Phi_E(\X_i,\W_e)\|_F - \tau | \leq \epsilon$ $\forall i$ and $\forall \epsilon > 0$. 
\vspace{-2mm}
\item Embedded points can be decoded exactly: $\Phi_D(\Phi_E(\X_i,\W_e),\W_d) = \X_i, \ \forall i$.
\end{enumerate}
}
\end{proposition}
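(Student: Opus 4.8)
\myparagraph{Proof proposal}
The plan is to exhibit explicit weights for which, on the finitely many columns of $\X$, the encoder implements the affine map $\mathbf{x} \mapsto \alpha\mathbf{x} + \z_0$ and the decoder implements its inverse $\mathbf{z} \mapsto \alpha^{-1}(\mathbf{z} - \z_0)$, where $\z_0 \in \Re^d$ is a fixed vector with $\|\z_0\|_F = \tau$ (for instance a suitable multiple of a single coordinate, or channel, indicator) and $\alpha > 0$ is a scalar chosen small at the very end. Granting such a construction, property 3 holds exactly for every $\alpha > 0$; and since there are only finitely many columns, for any $\epsilon > 0$ it then suffices to take $\alpha \leq \epsilon \,/\, \big( 1 + \max_{k,l}\|\X_k - \X_l\|_F + \max_k \|\X_k\|_F \big)$, which gives $\|\Phi_E(\X_i,\W_e) - \Phi_E(\X_j,\W_e)\|_F = \alpha\|\X_i - \X_j\|_F \leq \epsilon$ (property 1) and $\big| \|\Phi_E(\X_i,\W_e)\|_F - \tau \big| = \big| \|\alpha\X_i + \z_0\|_F - \|\z_0\|_F \big| \leq \alpha\|\X_i\|_F \leq \epsilon$ (property 2).

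The one device needed to handle the ReLU is to push all of its pre-activations into the nonnegative orthant, so that the nonlinearity acts as the identity and each network reduces to a composition of affine maps. Let $(\Wb^2,\Wb^1)$ be linear operators of the form allowed in the encoder with $\Wb^2\Wb^1\X = \X$ — available by the express-identity hypothesis, and taken as single delta-kernel channels in the convolutional case — with an analogous pair for the decoder's operators. For the encoder, set $\Wb_e^1 = \Wb^1$ and $\b_e^1 = c\1$ with $c$ large enough (possible since $\X$ has finitely many columns) that $\Wb^1\X_i + c\1$ is entrywise nonnegative for all $i$, so that $(\Wb_e^1\X_i + \b_e^1)_+ = \Wb^1\X_i + c\1$; then $\Wb_e^2 = \alpha\Wb^2$ and $\b_e^2 = \z_0 - \alpha c\,\Wb^2\1$ yield $\Phi_E(\X_i,\W_e) = \alpha\Wb^2\Wb^1\X_i + \z_0 = \alpha\X_i + \z_0$. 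For the decoder, write $\mathbf{u}_i := \alpha\X_i + \z_0$ for the embedded points; setting $\Wb_d^1 = \alpha^{-1}\Wb^1$ and $\b_d^1 = c'\1 - \alpha^{-1}\Wb^1\z_0$ with $c'$ large enough that $\Wb^1\X_i + c'\1$ is entrywise nonnegative for all $i$ gives $\Wb_d^1\mathbf{u}_i + \b_d^1 = \Wb^1\X_i + c'\1$, so the ReLU is again inert, and then $\Wb_d^2 = \Wb^2$, $\b_d^2 = -c'\,\Wb^2\1$ give $\Phi_D(\mathbf{u}_i,\W_d) = \Wb^2\Wb^1\X_i = \X_i$, i.e.\ property 3. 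A scalar multiple of a linear operator stays inside whatever linear constraint class it belonged to, so any structure imposed on the $\Wb$ matrices (Toeplitz, convolutional, etc.) survives the rescalings by $\alpha$ and $\alpha^{-1}$; and if the biases must also be structured, choosing $\z_0$ supported on a single channel makes $\b_e^2$ and $\b_d^1$ respect that structure as well.

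I do not expect a genuine obstacle: once this affine-map picture is in hand, the rest is bookkeeping. The only points needing a moment's care are (i) making all the ReLU pre-activations simultaneously nonnegative, which is immediate from the uniform additive shifts $c\1$ and $c'\1$ precisely because $\X$ has finitely many columns, and (ii) verifying that the scalings by $\alpha$, $\alpha^{-1}$ and the bias shifts remain within any structural constraints on the linear operators and biases, which they do as noted above. It is also worth observing that the embedded points $\mathbf{u}_i$ stay distinct whenever the columns $\X_i$ are (since $\alpha \neq 0$), so the construction is consistent with the one-to-one condition discussed in Section~\ref{sec:auto_enc}, even though exact decoding in property 3 does not by itself require distinctness.
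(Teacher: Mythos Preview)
Your proposal is correct and follows essentially the same approach as the paper: both constructions use large additive biases to push the ReLU pre-activations into the nonnegative orthant (so the ReLU acts as identity), reduce the encoder and decoder to affine maps of the form $\mathbf{x}\mapsto \alpha\mathbf{x} + (\text{fixed vector of norm }\tau)$ and its inverse, and then send $\alpha\to 0$. The only cosmetic difference is that the paper carries two free scalars $\alpha,\beta$ and realizes the target point as $\beta\tilde\Wb_e^2\tilde\b_e^1$ (choosing $\beta$ to hit norm $\tau$), whereas you fix $\z_0$ with $\|\z_0\|_F=\tau$ up front and use a single scalar $\alpha$; both yield the same embedded form $\alpha\X_i + \text{const}$ and the same three conclusions.
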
 

From the above simple example, we can see that even with very simple network architectures (i.e., not necessarily highly expressive) it is still possible to have solutions which are arbitrarily close to the global optimum described in Theorem \ref{thm:inst_norm}, in the sense that the points can be made to be arbitrarily close to each other in the embedded space with norm arbitrarily close to $\tau$ (for any arbitrary choice of $\tau$), while still having a perfect reconstruction of $\X$.

\vspace{-2mm}

\section{Experiments}

\vspace{-2mm}

Here, we present experiments on both real and synthetic data that verify our theoretical predictions experimentally. We first evaluate the Autoencoder Regularization form given in \eqref{eq:autoenc_obj} by repeating all of the experiments from \cite{Ji:NIPS17}.  In the Supplementary Material we first show that the optimization problem never reaches a stationary point due to the pathology described by Prop \ref{prop:ill-posed} (see Figure \ref{fig:yaleB_opt} in Supplementary Material), and below we show that the improvement in performance reported in \cite{Ji:NIPS17} is largely attributable to an ad-hoc post-processing step.  Then, we present experiments on a simple synthetic dataset to illustrate our theoretical results.

\myparagraph{Repeating the Experiments of \cite{Ji:NIPS17}} First we use the code provided by the authors of \cite{Ji:NIPS17} to repeat all of their original clustering experiments on the Extended Yale-B (38 faces), ORL, and COIL100 datasets.  
As baseline methods, we perform subspace clustering on the raw data as well as subspace clustering on embedded features obtained by training the autoencoder network without the $F(\Z,\C)$ term (i.e., $\gamma =0$ and $\C$ fixed at $\I$ in \eqref{eq:autoenc_obj}). See Supplementary Material for further details.

\begin{figure}
\vspace{-4mm}
\includegraphics[width=0.33\linewidth]{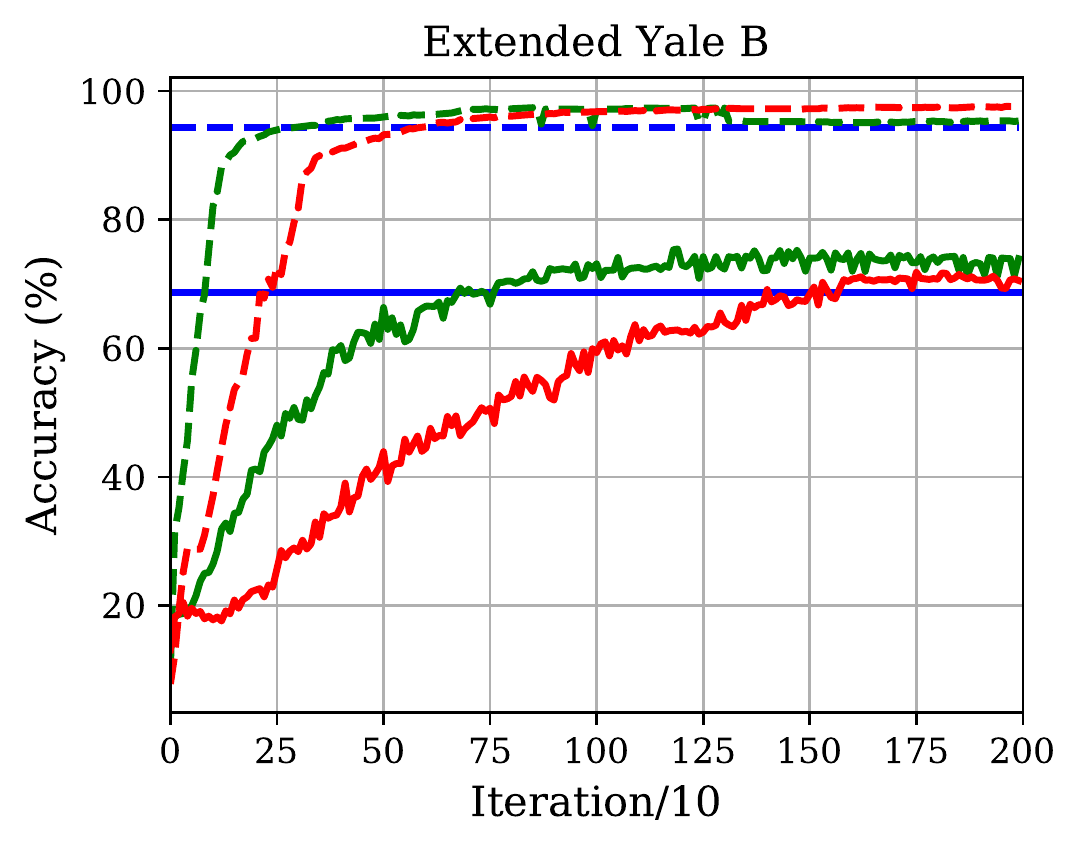}
\includegraphics[width=0.33\linewidth]{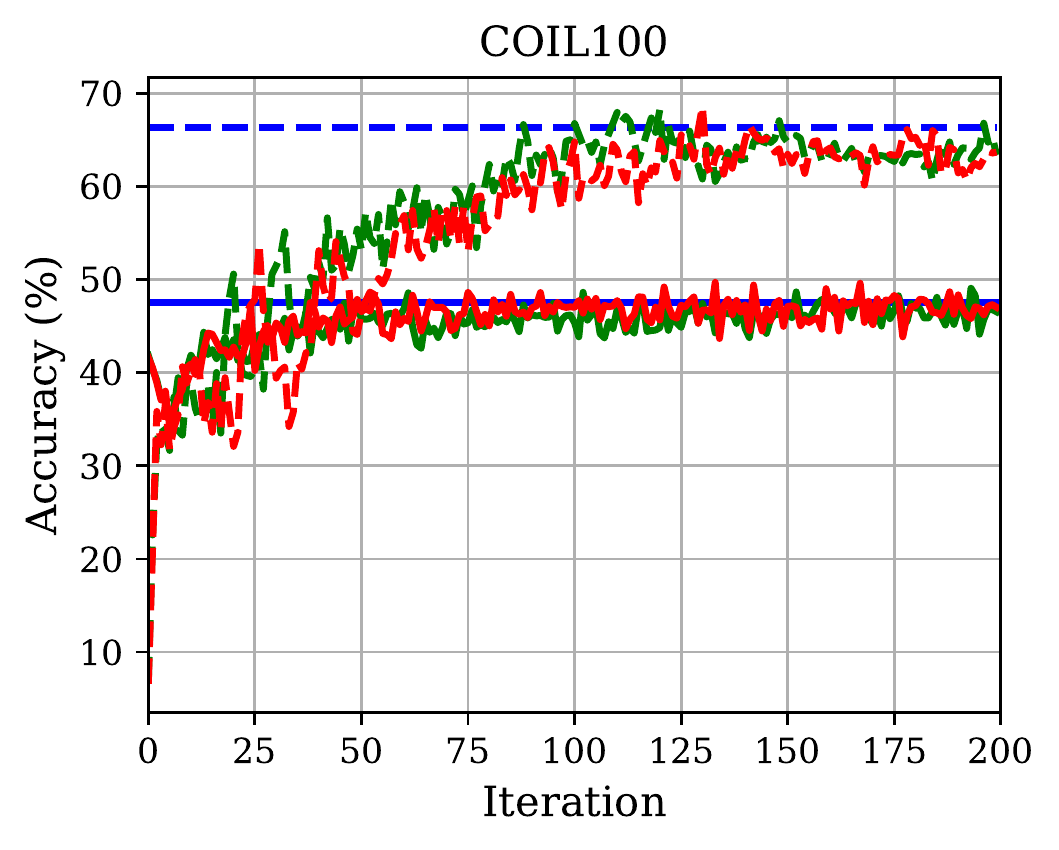}
\includegraphics[width=0.33\linewidth]{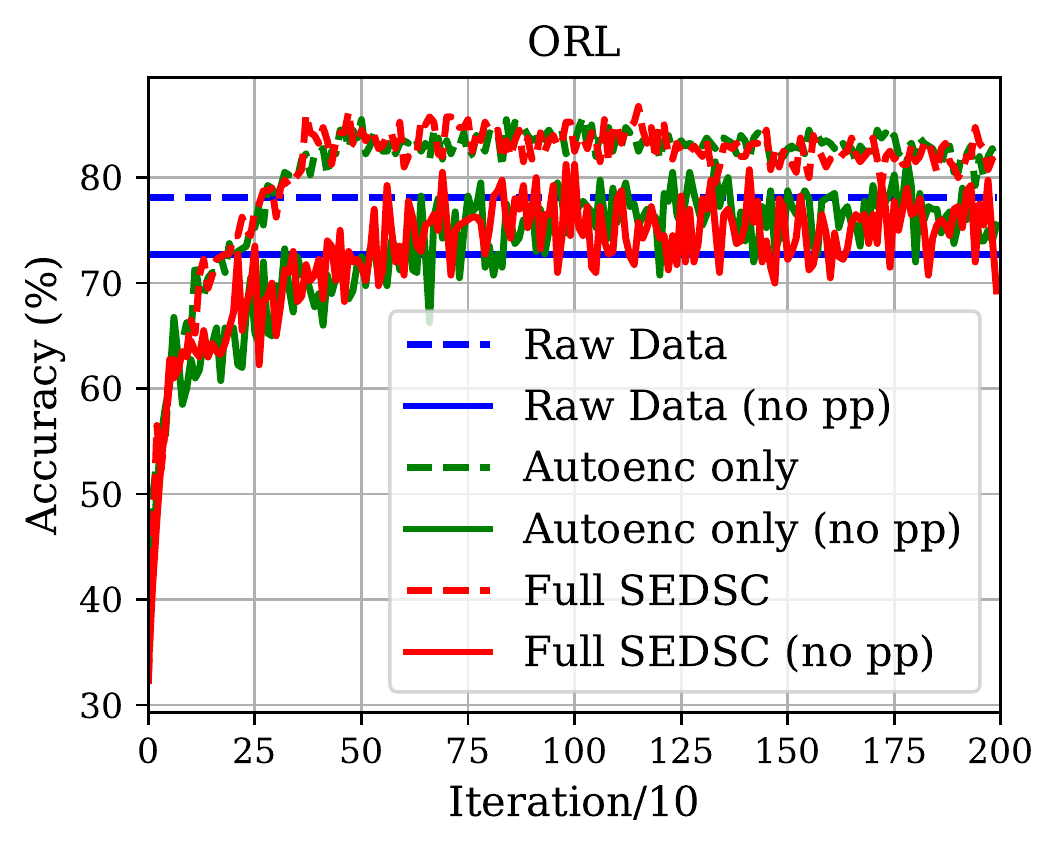}
\vspace{-7mm}
\caption{\label{fig:yaleB_acc} Clustering accuracy results for YaleB (38 faces), COIL100, and ORL datasets with \textbf{(Dashed Lines)} and without \textbf{(Solid Lines)} the post-processing step on $\C$ matrix proposed in \cite{Ji:NIPS17}.  \textbf{(Raw Data)} Clustering on the raw data. \textbf{(Autoenc only)} Clustering features from an autoencoder trained without the $F(\Z,\C)$ term. \textbf{(Full SEDSC)} The full model in \eqref{eq:autoenc_obj}.}
\label{fig:real_data}
\vspace{-4mm}
\end{figure}

\begin{table}[b]
	\centering
	
	\vspace{-3mm}
	\caption{Clustering accuracy shown in Fig \ref{fig:real_data}. To be consistent with \cite{Ji:NIPS17}, we report the results at $1000 / 120 / 700$ iterations for Yale B / COIL100 / ORL, respectively.} 
	\label{tab:acc}
	{\small
		\label{tbl:results}
		\begin{tabular}{c||c|c|c||c|c|c}
			          & \multicolumn{3}{c||}{With post-processing} & \multicolumn{3}{|c}{Without post-processing} \\
			\hline
			          & Raw Data  & Autoenc only &   Full SEDSC   & Raw Data  & Autoenc only &    Full SEDSC     \\
			\hline
			  YaleB   & $94.40\%$ &  $\textbf{97.12}\%$   &   $96.79\%$    & $68.71\%$ &  $\textbf{71.96}\%$   &     $59.09\%$     \\
			\hline
			COIL100  & $66.47\%$ &  $\textbf{68.26}\%$   &   $64.96\%$    & $\textbf{47.51}\%$ &  $44.84\%$   &     $45.67\%$     \\
			\hline
			   ORL    & $78.12\%$ &  $83.43\%$   &   $\textbf{84.10}\%$    & $72.68\%$ &  $\textbf{73.73}\%$   &     $73.50\%$
		\end{tabular} }
\end{table} 

In addition to proposing the model in \eqref{eq:autoenc_obj}, the authors of \cite{Ji:NIPS17} also implement a somewhat arbitrary post-processing of the $\C$ matrix recovered from the SEDSC model before the final spectral clustering, which involves 1) entrywise hard thresholding, 2) applying the shape interaction matrix method \cite{Costeira:ICCV1995} to $\C$, and 3) raising $\C$ to a power, entry-wise. 
Likewise, many subsequent works on SEDSC follow \cite{Ji:NIPS17} and employ very similar post-processing steps on $\C$. As shown in Figure \ref{fig:real_data} and Table \ref{tab:acc} there is little observed benefit for using SEDSC, as it achieves comparable (or worse) performance than baseline methods in almost all settings when the post-processing of $\C$ is applied consistently (or not) across all methods.

\begin{figure}
\includegraphics[width=\textwidth]{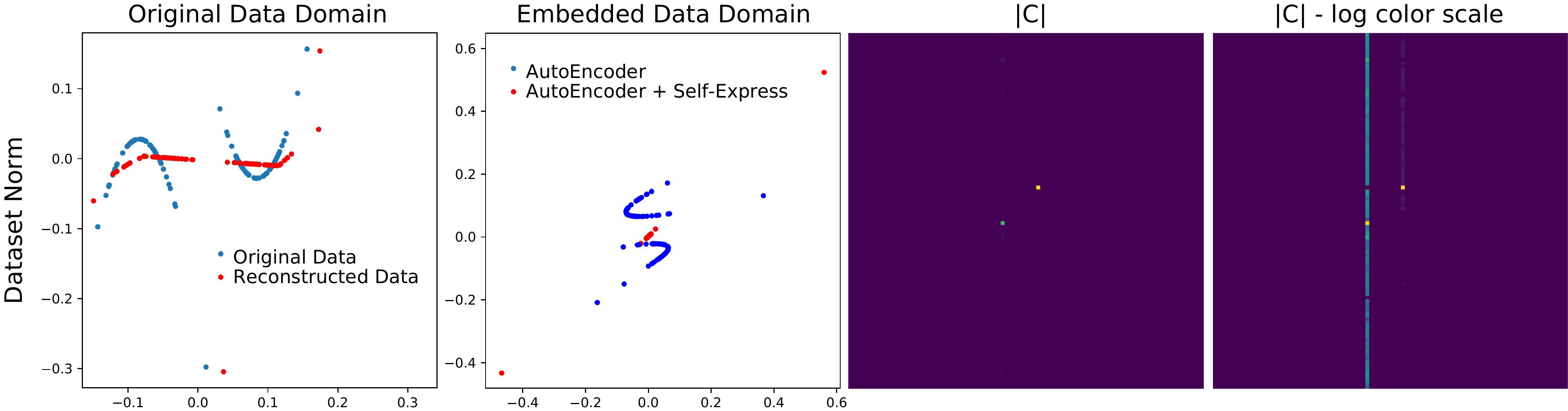} 
  \caption{ Results for synthetic data using the dataset normalization scheme.  \textbf{(Left)} Original data points (Blue) and the data points at the output of the autoencoder when the full model \eqref{eq:autoenc_obj} is used (Red).  \textbf{(Center Left)} Data representation in the embedded domain when just the autoencoder is trained without the $F(\Z,\C)$ term (Blue) and the full SEDSC model is used (Red).  \textbf{(Center Right)} The absolute value of the recovered $\C$ encoding matrix when trained with the full model. \textbf{(Right)} Same plot as the previous column but with a logarithmic color scale to visualize small entries.}
\label{fig:syn_exp_crop}
\vspace{-4mm}
\end{figure}

\myparagraph{Synthetic Data Experiments} Finally, to illustrate our theoretical predictions we construct a simple synthetic dataset which consists of 100 points drawn from the union of two parabolas in $\Re^2$, where the space of the embedding is also $\Re^2$.  We then train the model given in \eqref{eq:autoenc_obj} with $\theta(\C) = \theta_{SSC}(\C) = \|\C\|_1 + \delta(\diag(\C )= \0)$, the encoder/decoder networks being simple single-hidden-layer fully-connected networks with 100 hidden units, and ReLU activations on the hidden units.  Figure \ref{fig:syn_exp_crop} shows the solution obtained when we directly add a normalization operator to the encoder network which normalizes the output of the encoder to have unit Frobenius norm over the entire dataset (Dataset Normalization).  Additional experiments for other normalization schemes and a description of the details of our experiments can be found in the Supplementary Material.

From Figure \ref{fig:syn_exp_crop} one can see that our theoretical predictions are largely confirmed experimentally.  Namely, one sees that when the full SEDSC model is trained the embedded representation largely as predicted by Theorem \ref{thm:opt_ssc}, with almost all of the embedded points (Left Center - Red points) being close to the origin with the exception of two points, which are co-linear with each other.  Likewise, the $\C$ matrix is dominated by two non-zero entries with the remaining non-zero entries only appearing on the log-scale color scale.  We note that as this is a very simple dataset (i.e., two parabolas without any added noise) one would expect most reasonable manifold clustering/learning algorithms to succeed; however, due to the deficiencies of the SEDSC model we have shown in our analysis a trivial solution results.

\vspace{-3mm}

\section{Conclusions}

\vspace{-3mm}
 
We have presented a theoretical and experimental analysis of the Self-Expressive Deep Subspace Clustering (SEDSC) model.  We have shown that in many cases the SEDSC model is ill-posed and results in trivial data geometries in the embedded space.  Further, our attempts to replicate previously reported experiments lead us to conclude that much of the claimed benefit of SEDSC is attributable to other factors such as post-processing of the recovered encoding matrix, $\C$, and not the SEDSC model itself.  Overall, we conclude that considerably more attention needs to be given to the issues we have raised in this paper in terms of both how models for this problem are designed and how they are evaluated to ensure that one arrives at meaningful solutions and can clearly demonstrate the performance of the resulting model without other confounding factors.

\myparagraph{Acknowledgments} The authors thank Zhihui Zhu and Benjamin B\'{e}jar Haro for helpful discussions in the early stages of this work.  This work was partially supported by the Northrop Grumman Mission Systems Research in Applications for Learning Machines (REALM) initiative, NSF Grants 1704458, 2031985 and 1934979,
and the Tsinghua-Berkeley Shenzhen Institute Research Fund.


\bibliography{%
biblio/vidal,%
biblio/sparse,%
biblio/learning,%
biblio/deeplearning,%
biblio/optimization%
}
\bibliographystyle{iclr2021_conference}


\section{Proofs of Results in Main Paper}

Here we present the proofs of our various results that we give in the main paper. 

\subsection{Proof of Proposition \ref{prop:ill-posed}}

{
\renewcommand{\theproposition}{\ref{prop:ill-posed}}
\begin{proposition}
Consider the objective in \eqref{eq:autoenc_obj} and suppose $\theta(\C)$ in \eqref{eq:basic_obj} is a function which achieves its minimum at $\C=\0$ and satisfies $\theta(\mu \C) < \theta(\C), \ \forall \mu \in (0,1)$.  If for any choice of $(\W_d,\W_e)$ and $\tau_1, \tau_2 \in (0,1)$ there exists $(\hat \W_d, \hat \W_e)$ such that $\Phi_E(\X, \hat \W_e) = \tau_1 \Phi_E(\X, \W_e)$ and $\Phi_D( \tau_2 \Z, \hat \W_d) = \Phi_D(\Z, \W_d)$ then the $F$ term in \eqref{eq:autoenc_obj} can be made arbitrarily small without changing the value of the loss function $\ell$.  Further, if the final layer of the encoding network is positively-homogeneous with degree $p\neq0$, such a $(\hat \W_d, \hat \W_e)$ will always exist simply by scaling the network weights of the linear (or affine) operator parameters.
\end{proposition}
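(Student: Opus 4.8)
The plan is to prove the two assertions separately. For the first, I would fix an arbitrary feasible tuple $(\W_e,\W_d,\C)$, put $\Z=\Phi_E(\X,\W_e)$, and build a one-parameter family of feasible tuples, indexed by $\mu\in(0,1)$, along which the autoencoder loss $\ell$ is exactly preserved while $F$ collapses to its minimal achievable value. The family I have in mind shrinks the embedding and the self-expression matrix together and absorbs the change in the decoder: applying the hypothesis with $\tau_1=\mu$ and $\tau_2=\mu^2$ (both in $(0,1)$), pick $(\hat\W_e,\hat\W_d)$ with $\hat\Z:=\Phi_E(\X,\hat\W_e)=\mu\Z$ and $\Phi_D(\mu^2\mathbf{Y},\hat\W_d)=\Phi_D(\mathbf{Y},\W_d)$ on the relevant decoder inputs $\mathbf{Y}$, and set $\hat\C=\mu\C$ (any constraint folded into $\theta$, e.g.\ $\diag(\cdot)=\0$, survives the scaling, so $\hat\C$ remains feasible). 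Since $\hat\Z\hat\C=(\mu\Z)(\mu\C)=\mu^2\Z\C$, the decoder output — and hence $\ell$ — is unchanged.

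Next I would track $F$. A direct computation gives $F(\mu\Z,\mu\C)=\tfrac{\mu^2}{2}\|\mu\Z\C-\Z\|_F^2+\lambda\theta(\mu\C)$. The quadratic term is immediate: since $\mu<1$ it is bounded by $\tfrac{\mu^2}{2}(\|\Z\C\|_F+\|\Z\|_F)^2$, which is $O(\mu^2)$. For the regularizer, applying the monotonicity hypothesis to $\mu_2\C$ with ratio $\mu_1/\mu_2$ shows $\theta(\mu_1\C)<\theta(\mu_2\C)$ whenever $0<\mu_1<\mu_2<1$, so $\mu\mapsto\theta(\mu\C)$ is strictly increasing on $(0,1)$ and bounded below by $\theta(\0)=\min_\C\theta(\C)$; hence $\lim_{\mu\to0^+}\theta(\mu\C)$ exists, and it equals $\theta(\0)$ under lower semicontinuity of $\theta$ at $\0$ — true for all the standard choices $\theta_{SSC}$, $\|\cdot\|_*$, $\|\cdot\|_F$, and elastic net, for which moreover $\theta(\0)=0$. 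Thus $F(\mu\Z,\mu\C)\to\lambda\theta(\0)=0$, so $F$ can be pushed below any $\epsilon>0$ with $\ell$ held fixed. This is also where I would flag the only genuinely delicate point: ``arbitrarily small'' is literally correct because we may rescale $\C$ (a non-radial change to $\C$ would break the reconstruction), which is exactly what forces the asymmetric compensation $\tau_2=\tau_1^2$ and what pins the attainable floor of $F$ at $\lambda\theta(\0)$.

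For the ``further'' part I would simply check that the two identities required by the hypothesis are realized by rescaling one linear (or affine) operator in each network — this is a mild generalization of the ReLU example preceding the proposition. If the encoder ends in a nonlinearity $h$ that is positively homogeneous of degree $p\neq0$, write it as $h$ applied to the output of the last linear/affine map $A$; scaling the parameters of $A$ (and its bias, if any) by $\tau_1^{1/p}$ yields $h(\tau_1^{1/p}u)=\tau_1 h(u)$, i.e.\ $\Phi_E(\X,\hat\W_e)=\tau_1\Phi_E(\X,\W_e)$ (and if the encoder instead ends in an affine map, scaling that map and its bias by $\tau_1$ works directly). For the decoder, whose first layer is affine $\z\mapsto\Wb_d^1\z+\b_d^1$, replace $\Wb_d^1$ by $\tau_2^{-1}\Wb_d^1$ and leave everything else untouched; then on input $\tau_2\z$ the first layer returns $\Wb_d^1\z+\b_d^1$, so all subsequent layers receive identical inputs and $\Phi_D(\tau_2\Z,\hat\W_d)=\Phi_D(\Z,\W_d)$, for arbitrary $\tau_1,\tau_2\in(0,1)$. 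None of these steps is hard; the only things that need care are the bookkeeping of the compensating exponents in the first part and the semicontinuity remark that makes ``arbitrarily small'' precise.
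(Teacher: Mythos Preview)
Your proof is correct and follows essentially the same route as the paper's: rescale the encoder output and $\C$, absorb the product in the decoder's first linear layer, and for the second assertion scale the last encoder affine map by $\tau_1^{1/p}$. The only cosmetic difference is that the paper decouples the encoder scaling $\tau$ from the $\C$-scaling $\mu$ (the decoder then absorbs the product $\mu\tau$), so your aside that the relation $\tau_2=\tau_1^2$ is ``forced'' is not quite right; on the other hand you are more careful than the paper in flagging the lower-semicontinuity of $\theta$ at $\0$ needed to conclude $\theta(\mu\C)\to\theta(\0)$.
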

\addtocounter{proposition}{-1}
}
\begin{proof}

Let $(\C, \W_e, \W_d)$ be an arbitrary triplet. For any choice of $\tau, \mu \in (0, 1)$, note the statement conditions require that there exists $(\hat{\C}, \hat{\W}_e, \hat{\W}_d)$ which satisfy 
\begin{equation}
\begin{split}
\hat{\C} &= \mu \C \\
\Phi_E(\X, \hat{\W}_e)&=\tau \Phi_E(\X, \W_e) \\
\Phi_D(\mu\tau \Z, \hat{\W}_d)&= \Phi_D(\Z, \W_d).
\end{split}
\end{equation}
Then, the function $\ell$ evaluated at $(\hat{\C}, \hat{\W}_e, \hat{\W}_d)$ is given by 
\begin{equation}
\ell(\X, \Phi_D(\Phi_E(\X, \hat{\W}_e)\mu \C, \hat{\W}_d)) = \ell(\X, \Phi_D(\Phi_E(\X, \W_e) \C, \W_d)),
\end{equation}
which is equal to $\ell$ evaluated at $(\C, \W_e, \W_d)$. Moreover, the function $F$ evaluated at $(\hat{\C}, \hat{\W}_e, \hat{\W}_d)$ is given by 
\begin{equation}
\begin{split}
&\frac{1}{2} \|\Phi_E(\X, \hat{\W}_e) - \Phi_E(\X, \hat{\W}_e) \mu \C\|_F^2 + \lambda \theta(\mu \C) \\
= \tau^2 &\frac{1}{2} \|\Phi_E(\X, \W_e) - \Phi_E(\X, \W_e) \mu \C\|_F^2 + \lambda \theta(\mu \C).
\end{split}
\end{equation}
Note that the above equation can be made arbitrarily small for choice of $\tau, \mu$ sufficiently small, completing the first statement of the result.

To see the second part of the claim, first note that the condition on the decoder -- that for all $\tau_2 \in (0,1)$ there exists $\hat \W_d$ such that $\Phi_D(\tau_2\Z,\hat \W_d) = \Phi_D(\Z,\W_d)$ -- is trivially satisfied by all neural networks by simply scaling the input weights in the first layer of the decoder network by $\tau_2^{-1}$.
As a result, we are left to show that there will always exist a set of encoder weights which satisfies the conditions of the statement.  To see this, w.l.o.g. let $\Phi_e(\X,\W_e)$ take the general form
\begin{equation}
\Phi_e(\X, \W_e) = \psi( \mathcal{A}(h(\X, \bar \W_e), \mathbf{A}, \mathbf{b}))
\end{equation}
where $\mathcal{A}(\cdot,\mathbf{A},\mathbf{b})$ is an arbitrary linear (or affine) operator parameterized by linear parameters $\mathbf{A}$ and bias terms (for affine operators) $\mathbf{b}$; $h(\X,\bar \W_e)$ is an arbitrary function parameterized by $\bar \W_e$ (note that $\W_e = \{ \mathbf{A}, \mathbf{b}, \bar \W_e \}$); and $\psi$ is an arbitrary positively homogeneous function with degree $p \neq 0$.  From this note that for any $\alpha > 0$ we have the following:
\begin{equation}
\psi(\mathcal{A}(h(\X,\bar \W_e), \alpha \mathbf{A}, \alpha \mathbf{b})) = \psi( \alpha \mathcal{A}(h(\X, \bar \W_e), \mathbf{A}, \mathbf{b})) = \alpha^p \psi( \mathcal{A}(h(\X, \bar \W_e), \mathbf{A}, \mathbf{b})).
\end{equation}
where the first equality is due to basic properties of linear (or affine) operators and the second equality is due to positive homogeneity.  As a result, for any choice of $\tau_1 \in (0,1)$ we can choose a scaling $\alpha = \tau_1^{1/p}$ to achieve that for parameters $\hat \W_e = (\tau_1^{1/p} \mathbf{A}, \tau_1^{1/p} \mathbf{b}, \bar \W_e)$ we have $\Phi_e(\X,\hat \W_e) = \tau_1 \Phi_e(\X,\W_e)$, completing the result.
\end{proof}

\subsection{Proof of Theorem \ref{thm:dataset_opt}}

{
\renewcommand{\thetheorem}{\ref{thm:dataset_opt}}
\begin{theorem}
Consider the following optimization problems which jointly optimize over $\Z$ and $\C$:
\begin{equation}
\mathrm{(P1)} \ \ \ \min_{\C,\Z} \left\{ F(\Z,\C) \st \|\Z\|_F^2 \geq \tau \right\} \ \ \ \ \ \mathrm{(P2)} \ \ \ \min_{\C,\Z} \left\{ F(\Z,\C) \st \|\Z^i\|^2_F \geq \tfrac{\tau}{d} \ \ \forall i \right\}
\end{equation}
Then optimal values for $\C$ for both $\mathrm{(P1)}$ and $\mathrm{(P2)}$ are given by
\begin{align}
\label{eq:sigma_min}
\C^* \in \argmin_\C \tfrac{1}{2} \sigma_{\min}^2(\C-\I) \tau + \lambda \theta(\C).
\end{align}
Moreover, for any optimal $\C^*$, let $r$ be the multiplicity of the smallest singular value of $\C^*-\I$ and let $\Q \in \Re^{N \times r}$ be an orthonormal basis for the subspace spanned by the left singular vectors associated with the smallest singular value of $\C^*-\I$.  Then we have that optimal values for $\Z$ are given by:
\begin{align}
\Z^* \in \{\B \Q^\top \!\! : \B \in \Re^{d \times r} \cap \mathcal{B} \}, \ \ \mathcal{B} = \begin{cases} \begin{cases} \{\B : \|\B\|_F^2 = \tau \} & \sigma_{\min}(\C^*-\I) > 0 \\  \{ \B : \|\B\|_F^2 \geq \tau \} & \sigma_{\min} (\C^*-\I) = 0 \end{cases} & \mathrm{(P1)} \\
\begin{cases} \{  \B : \|\B^i\|_F^2 = \tfrac{\tau}{d}, \forall i \} & \sigma_{\min}(\C^*-\I) > 0 \\  \{ \B : \|\B^i\|_F^2 \geq \tfrac{\tau}{d}, \forall i \} & \sigma_{\min} (\C^*-\I) = 0 \end{cases} & \mathrm{(P2)} \end{cases}
\end{align}
\end{theorem}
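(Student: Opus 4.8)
The plan is to \emph{partially minimize} $F(\Z,\C)$ over $\Z$ with $\C$ held fixed, show that for both constraint sets the resulting value equals $\tfrac12\sigma_{\min}^2(\C-\I)\tau + \lambda\theta(\C)$, and then read off the optimal $\C$ from that one-dimensional-looking expression and the optimal $\Z$ from the (already characterized) inner minimizers. First I would rewrite the self-expressive term via the Gram matrix: setting $M := \C-\I$ and using $\Z^\top\Z = \sum_{l=1}^d (\Z^l)^\top\Z^l$, one has $\tfrac12\|\Z M\|_F^2 = \tfrac12\langle \Z^\top\Z,\, MM^\top\rangle = \tfrac12\sum_{l=1}^d \Z^l MM^\top (\Z^l)^\top$. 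Let $MM^\top = \sum_i \sigma_i^2 u_i u_i^\top$ be the spectral decomposition, where the $\sigma_i$ are the singular values of $\C-\I$ and the $u_i$ its left singular vectors, so that $\lambda_{\min} := \sigma_{\min}^2(\C-\I)$ is the smallest eigenvalue and $\Q$ an orthonormal basis of $\mathcal{U} := \mathrm{span}\{u_i : \sigma_i^2 = \lambda_{\min}\}$. The workhorse is the elementary inequality $\langle G, MM^\top\rangle \geq \lambda_{\min}\,\tr(G)$ for every PSD $G$, which follows from $\langle G,\, MM^\top - \lambda_{\min}\I\rangle = \sum_i (\sigma_i^2 - \lambda_{\min})\, u_i^\top G u_i \geq 0$, and for which equality holds (since $G \succeq 0$) if and only if $\mathrm{range}(G) \subseteq \mathcal{U}$.

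For $\mathrm{(P1)}$ I would apply this with $G = \Z^\top\Z$ and use $\tr(G) = \|\Z\|_F^2 \geq \tau$ together with $\lambda_{\min} \geq 0$ to get $\tfrac12\|\Z M\|_F^2 \geq \tfrac12\lambda_{\min}\tau$, hence $F(\Z,\C) \geq \tfrac12\sigma_{\min}^2(\C-\I)\tau + \lambda\theta(\C)$ for every feasible $\Z$. This bound is attained by any $\Z = \B\Q^\top$ with $\B \in \Re^{d\times r}\cap\mathcal{B}$: such $\Z$ has $\Z^\top\Z = \Q\B^\top\B\Q^\top$ with range in $\mathcal{U}$, achieving equality in the key inequality, and $\|\Z\|_F^2 = \|\B\|_F^2$, which is $\geq\tau$ by the definition of $\mathcal{B}$; when $\sigma_{\min}(\C-\I) > 0$ the lower-bound argument forces $\tr(G) = \tau$, i.e. $\|\B\|_F^2 = \tau$, which is exactly the active-constraint branch of $\mathcal{B}$. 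For the converse inclusion — every minimizing $\Z$ has this form — equality in the key inequality forces $\mathrm{range}(\Z^\top\Z)\subseteq\mathcal{U}$, i.e. the rows of $\Z$ lie in $\mathrm{span}(\Q)$, so $\Z = \B\Q^\top$ for some $\B$, and the norm conditions then pin $\B$ to $\mathcal{B}$. For $\mathrm{(P2)}$ the argument is the same except that I bound each summand separately, $\tfrac12\,\Z^l MM^\top (\Z^l)^\top \geq \tfrac12\lambda_{\min}\|\Z^l\|_F^2 \geq \tfrac12\lambda_{\min}\tfrac{\tau}{d}$, and sum over $l = 1,\dots,d$ to recover the same value $\tfrac12\lambda_{\min}\tau$; equality now forces each row $(\Z^l)^\top \in \mathcal{U}$ (so again $\Z = \B\Q^\top$) and, when $\sigma_{\min}(\C-\I)>0$, $\|\Z^l\|_F^2 = \tfrac{\tau}{d}$ for every $l$, which is precisely the row-wise description of $\mathcal{B}$ in case $\mathrm{(P2)}$.

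Putting this together gives $\min_{\Z\ \mathrm{feasible}} F(\Z,\C) = \tfrac12\sigma_{\min}^2(\C-\I)\tau + \lambda\theta(\C)$ for both problems, so minimizing jointly over $(\Z,\C)$ reduces to minimizing this expression over $\C$, which is exactly \eqref{eq:sigma_min}; and for each resulting optimal $\C^*$ the optimal $\Z$ are the minimizers of the inner problem, as characterized above. The step I expect to need the most care is the ``only if'' direction of the $\Z$-characterization — promoting equality in $\langle G, MM^\top\rangle \geq \lambda_{\min}\tr(G)$ to the statement that the rows of $\Z$ (for $\mathrm{(P1)}$), or each row individually (for $\mathrm{(P2)}$), lie in the smallest-singular-value left-subspace — together with carefully tracking the two regimes $\sigma_{\min}(\C-\I)>0$ (the norm constraint must be tight, giving equality in $\mathcal{B}$) versus $\sigma_{\min}(\C-\I)=0$ (the constraint need not bind, giving the inequality branch of $\mathcal{B}$ and the non-uniqueness of $\|\Z\|_F$).
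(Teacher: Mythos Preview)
Your proposal is correct and follows essentially the same approach as the paper: partially minimize $F(\Z,\C)$ over $\Z$ for fixed $\C$ by observing that the rows of any optimal $\Z$ must lie in the eigenspace of $(\C-\I)(\C-\I)^\top$ associated with its smallest eigenvalue, then use $\|\Z\|_F^2=\|\B\|_F^2$ and $\|\Z^i\|_F^2=\|\B^i\|_F^2$ to translate the constraints. The paper phrases the inner minimization as ``fix the row norms, then optimize the directions'' whereas you package it as the PSD inequality $\langle G,MM^\top\rangle\geq\lambda_{\min}\tr(G)$ with its equality characterization, but these are the same Rayleigh-quotient argument; if anything, your treatment of the equality/``only if'' direction is slightly more explicit than the paper's.
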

\addtocounter{theorem}{-1}
}

\begin{proof} 
The objective $F(\Z,\C)$ can be reformulated as
\begin{equation}
F(\Z,\C) = \frac{1}{2} \tr( \Z(\C-\I)(\C-\I)^\top \Z^\top) + \lambda \theta(\C) = \frac{1}{2} \sum_{i=1}^d \Z^i (\C-\I)(\C-\I)^\top (\Z^i)^\top + \lambda \theta(\C),
\end{equation}
where recall $\Z^i$ denotes the $i^{th}$ row of $\Z$.  If we add the constraints on $\Z$ we have the following minimization problems over $\Z$ with $\C$ held fixed:
\begin{equation}
\label{eq:Z_min}
\min_\Z \frac{1}{2} \sum_{i=1}^d \Z^i (\C-\I)(\C-\I)^\top (\Z^i)^\top \st \Z \in \mathcal{Z} = \begin{cases} 
\sum_{i=1}^d \|\Z^i\|_F^2 \geq \tau & \mathrm{(P1)} \\ \|\Z^i\|^2_F \geq \tfrac{\tau}{d} \ \forall i & \mathrm{(P2)} 
\end{cases}
.
\end{equation}
Note that if we fix the magnitude of the rows, $\|\Z^i\|^2_F = \k_i$ for any $\k \in \Re^d, \ \k \geq 0, \ \sum_{i=1}^d \k_i \geq \tau$ for (P1) and $\k \in \Re^d, \ \k_i \geq \tau/d$ for (P2) and optimize over the directions of the rows, then the minimum is obtained whenever $\Z^i$ is in the span of the eignevectors of $(\C-\I)(\C-\I)^\top$ with smallest eigenvalue, which implies that all the rows of an optimal $\Z$ matrix must lie in the span of $\Q$, where $\Q \in \Re^{N \times r}$ is an orthonormal basis for the subspace by the left singular vectors of $\C-\I$ associated with the smallest singular value of $\C-\I$, which has multiplicity $r$.

As a result, we have that optimal values of $\Z$ must take the form $\Z = \B \Q^\top$ for some $\B \in \Re^{d \times r}$.  Further, we note that the following also holds:
\begin{equation}
\label{eq:F_BQ}
\begin{split}
F(\B \Q^\top,\C) &= \frac{1}{2} \tr(\B \Q^\top (\C-\I)(\C-\I)^\top \Q \B^\top) + \lambda \theta(\C) = \\
&= \frac{1}{2}\sigma_{\min}^2(\C-\I) \tr(\B \B^\top) + \lambda \theta(\C) = \frac{1}{2} \sigma_{\min}^2(\C-\I)\|\B\|_F^2 + \lambda \theta(\C).
\end{split}
\end{equation}
The constraints for $\mathcal{\B}$ are then seen by noting $\Z \Z^\top = \B \Q^\top \Q \B^\top = \B \B^\top$, so $\|\Z\|_F^2 = \tr(\Z \Z^\top) = \tr(\B \B^\top)=\|\B\|_F^2$ and $\|\Z^i\|_F^2 = (\Z \Z^\top)_{i,i} = (\B \B^\top)_{i,i} = \B^i (\B^i)^\top = \|\B^i\|_F^2$, and if ${\sigma_{\min}(\C^*-\I) > 0}$ then minimizing \eqref{eq:F_BQ} w.r.t. $\B$ subject to the constraints on $\Z$ gives that $\|\B\|_F^2 = \tau$ is optimal.
\end{proof}

\subsection{Proof of Theorem \ref{thm:opt_ssc}}

{
\renewcommand{\thetheorem}{\ref{thm:opt_ssc}}
\begin{theorem}
Optimal solutions to the problems
\begin{align}
\mathrm{(P1)} \ \ \ &\min_{\Z,\C} \|\C\|_1 \st \diag(\C)=\0, \ \Z=\Z \C, \ \|\Z\|_F^2 \geq \tau \\ 
\mathrm{(P2)} \ \ \ &\min_{\Z,\C} \|\C\|_1 \st \diag(\C)=\0, \ \Z=\Z \C, \ \|\Z^i\|_F^2 \geq \tfrac{\tau}{d} \ \ \forall i 
\end{align}
are characterized by the set
\begin{equation}
(\Z^*,\C^*) \in \left\{ \begin{bmatrix} \z & \z & \0_{d \times N-2} \end{bmatrix} \P \right\} \times \left\{ \P^\top \begin{bmatrix} 0 & 1 & 0 &  \cdots & 0 \\ 1 & 0 & 0 & \cdots & 0 \\ 0 & 0 & 0 & \cdots & 0  \\ \vdots & \vdots & \vdots & \ddots & \\ 0 & 0 & 0 & & 0\end{bmatrix} \P  \right\},
\end{equation}
where $\P \in \Re^{N \times N}$ is an arbitrary signed-permutation matrix 
and $\z \in \Re^d$ is an arbitrary vector which satisfies $\|\z\|_F^2 \geq \tau/2$ for $\mathrm{(P1)}$ and $\z_i^2 \geq \tau/(2d), \forall i \in [1,d]$ for $\mathrm{(P2)}$.
\end{theorem}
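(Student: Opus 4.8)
The plan is to peel off the scalar parameters and reduce both (P1) and (P2) to a single optimization over $\C$ alone, prove a sharp bound $\|\C\|_1 \ge 2$ on that reduced problem together with its equality case, and finally recover $\Z^*$ from the left null space of $\C^*-\I$. For the reduction: since $\|\Z\|_F^2 \ge \tau > 0$, the constraint $\Z = \Z\C$ forces $\Z(\C-\I) = \0$ with $\Z \ne \0$, so $\C-\I$ must be singular, and, exactly as in the proof of Theorem~\ref{thm:dataset_opt}, every row of a feasible $\Z$ must lie in the left null space of $\C-\I$; hence $\Z = \B\Q^\top$ for an orthonormal basis $\Q$ of that null space, with $\|\Z\|_F^2 = \|\B\|_F^2$ and $\|\Z^i\|_F^2 = \|\B^i\|_F^2$, so the norm constraints pass verbatim to $\B$. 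Conversely, any $\C$ with $\diag(\C) = \0$ for which $\C-\I$ is singular is feasible for both problems — take every row of $\Z$ to be a large enough multiple of one fixed unit vector in the (nonempty) left null space. Since $\Z = \Z\C$ annihilates the self-expression term, (P1) and (P2) both reduce to minimizing $\|\C\|_1$ over $\{\C : \diag(\C) = \0, \ \C-\I \text{ singular}\}$.

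The heart of the argument is the bound $\|\C\|_1 \ge 2$ on this reduced problem. Using $\|\C\|_1 = \|\C^\top\|_1$ I would pick $\w \ne \0$ with $\C\w = \w$, iterate to $\w = \C^2\w$, and pass to entrywise absolute values to obtain $|\w| \le |\C|^2 |\w|$ componentwise, where $|\C|$ is the entrywise absolute value. Evaluating at a coordinate $i^*$ with $|\w_{i^*}| = \|\w\|_\infty > 0$ and writing $d_i := \|\C^i\|_1$, the zero diagonal gives
\[
\|\w\|_\infty \le (|\C|^2|\w|)_{i^*} \le \|\w\|_\infty \sum_{k \ne i^*} |\C_{i^*,k}|\, d_k \le \|\w\|_\infty\, d_{i^*}\, \max_{k \ne i^*} d_k ,
\]
so $d_{i^*} d_{j^*} \ge 1$ for $j^* := \argmax_{k \ne i^*} d_k \ne i^*$, whence $\|\C\|_1 = \sum_k d_k \ge d_{i^*} + d_{j^*} \ge 2\sqrt{d_{i^*} d_{j^*}} \ge 2$ by AM--GM. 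I expect this to be the main obstacle: the naive estimate from $|\w| \le |\C||\w|$ yields only $\|\C\|_1 \ge 1$, and it is precisely the squaring of $\C$ (which manufactures a second ``peak'' at $j^*$) together with the zero-diagonal hypothesis that upgrades the constant to $2$.

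Finally I would chase the equality conditions. Attaining $\|\C\|_1 = 2$ forces $d_k = 0$ for all $k \notin \{i^*, j^*\}$, forces $d_{i^*} = d_{j^*} = 1$, and forces $|\C_{i^*,j^*}| = 1$, so rows $i^*$ and $j^*$ of $\C$ are each supported on a single entry and all other rows vanish; feeding this back into $\C\w = \w$ on the $\{i^*,j^*\}$ principal $2\times 2$ block yields $\C_{i^*,j^*}\C_{j^*,i^*} = 1$, so these two entries are equal and lie in $\{-1,1\}$. Thus $\C^*$ is $s$ times the matrix with a single nonzero at positions $(i^*,j^*)$ and $(j^*,i^*)$, $s \in \{-1,1\}$, which is exactly of the form $\P^\top \mathbf{S} \P$ for a signed-permutation matrix $\P$, with $\mathbf{S}$ the swap matrix appearing in the statement; conversely every such matrix has zero diagonal, makes $\C-\I$ singular, and attains $\|\C\|_1 = 2$, so these are precisely the minimizers. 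For such a $\C^*$ the left null space of $\C^*-\I$ is one-dimensional, spanned (after the permutation, and with $s$ absorbed into $\P$) by $(1,1,0,\dots,0)^\top$, so by the reduction above the rows of $\Z^*$ lie on this line and $\Z^* = \begin{bmatrix} \z & \z & \0_{d \times N-2} \end{bmatrix} \P$; imposing $\|\Z\|_F^2 \ge \tau$ (resp.\ $\|\Z^i\|_F^2 \ge \tau/d$ for all $i$) then gives $\|\z\|_F^2 \ge \tau/2$ (resp.\ $\z_i^2 \ge \tau/(2d)$ for all $i$), which is the claimed set. The only remaining items are routine: the sign bookkeeping through $\P$ (signed permutations form a group), and verifying the two easy inclusions between the characterized set and the set of optimal pairs.
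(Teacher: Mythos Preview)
Your reduction to minimizing $\|\C\|_1$ over $\{\C:\diag(\C)=\0,\ \C-\I\text{ singular}\}$ matches the paper's, but your core argument is genuinely different. The paper fixes a left eigenvector $\q$ with $\q^\top\C=\q^\top$, takes the Lagrangian dual of $\min_\C\|\C\|_1$ subject to $\q^\top(\C-\I)=\0$ and $\diag(\C)=\0$, solves the dual in closed form to obtain the optimal value $\tfrac{|\q_{i_1}|}{|\q_{i_2}|}+\tfrac{1}{|\q_{i_1}|}\sum_{k\ge 2}|\q_{i_k}|$ as a function of the sorted $|\q|$, and then minimizes this over $\q$ to find that the optimum has exactly two nonzero entries of equal magnitude. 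You instead work primally with a right eigenvector $\w$, square the fixed-point relation to $\w=\C^2\w$, and combine the zero-diagonal with AM--GM to extract $\|\C\|_1\ge 2$ directly. Your route is more elementary---no convex duality at all---and the squaring trick is a nice way to manufacture the second ``peak'' needed to push the constant from $1$ to $2$. The paper's duality approach, on the other hand, is more mechanical and extends without change to the Schatten-$p$ case of Theorem~\ref{thm:schatten}; your combinatorial bound is tailored to the $\ell_1$ norm.

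One small point in your equality analysis: when you write ``so rows $i^*$ and $j^*$ of $\C$ are each supported on a single entry,'' you have at that moment only established this for row $i^*$ (from $|\C_{i^*,j^*}|=1$ and $d_{i^*}=1$). For row $j^*$ you still need an extra step: since $d_k=0$ for $k\notin\{i^*,j^*\}$, the eigenvector equation $\w=\C\w$ forces $\w_k=0$ for those $k$, so $\w$ is supported on $\{i^*,j^*\}$; then $\w_{j^*}=\C_{j^*,i^*}\w_{i^*}$, and combining with $\w_{i^*}=\C_{i^*,j^*}\w_{j^*}$ and $\w_{i^*}\ne 0$ gives $\C_{i^*,j^*}\C_{j^*,i^*}=1$, hence $|\C_{j^*,i^*}|=1$, and only then does $d_{j^*}=1$ pin row $j^*$ to the single entry $(j^*,i^*)$. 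This is exactly the step you describe next, so the content is there---just move the row-$j^*$ support claim after the $\C\w=\w$ computation.
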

\addtocounter{theorem}{-1}
}

\begin{proof}
As we observed from the proof of Theorem \ref{thm:dataset_opt}, if $\C-\I$ has a left null-space we can choose an optimal $\Z$ to have its rows in that null-space (and this also clearly implies we have $\sigma_{\min}(\C-\I)=0$).  Also note that when $\sigma_{\min}(\C-\I) = 0$ this corresponds to the case where $\Z=\Z\C \iff \Z(\I-\C) = \0$.  Further, note that if $\q$ is a non-zero vector in the left null-space of $\C-\I$ we have that $\q^\top (\C-\I) = \0 \iff \q^\top \C = \q^\top$, which implies that if we take $\q$ to be all-zero except for its $i^\text{th}$ entry, then this would imply that $\C_{i,i}$ must be non-zero, which would violate the $\diag(\C)=\0$ constraint, so any vector $\q$ in a left null-space of $\C-\I$ for a feasible $\C$ matrix must have at least two non-zero entries.  As a result, solving \eqref{eq:C_opt} with the constraint $\sigma_{\min}(\C-\I) = 0$ is equivalent to the following problem:
\begin{align}
\label{eq:Cq_obj}
\min_{\C,\q} \|\C\|_1 \st \q^\top (\C-\I) = \0, \ \diag(\C)=\0, \ \|\q\|_0 \geq 2.
\end{align}
where $\|\cdot\|_0$ denotes the $\ell_0$ pseudo-norm of a vector defined as the number of non-zero entries in a vector.
To first minimize w.r.t. $\C$ with a fixed $\q$, we form the Lagrangian:
\begin{align}
&\min_\C \left\{ L(\C,\Lam, \Gam) = \|\C\|_1 + \langle \Lam, (\I-\C^\top)\q \rangle + \langle \Diag(\Gam), \C \rangle \right\} = \\
&\min_\C \|\C\|_1 + \langle \q \Lam^\top - \Diag(\Gam) , \C \rangle + \langle \Lam, \q \rangle = \langle \Lam, \q \rangle - \delta( \|\q \Lam^\top - \Diag(\Gam) \|_\infty \leq 1),
\end{align}
where $\Lam \in \Re^N$ and $\Gam \in \Re^N$ are vectors of dual variables to enforce the $(\I - \C^\top)\q = \0$ and the $\diag(\C) = \0$ constraints, respectively.  This gives the dual problem
\begin{equation}
\label{eq:dual1}
\max_{\Lam, \Gam} \left\{ \langle \Lam, \q \rangle \st \|\q \Lam^\top - \Diag(\Gam) \|_\infty \leq 1 \right\} = \max_{\Lam} \left\{ \langle \Lam, \q \rangle \st | \q_i \Lam_j | \leq 1 \ \forall i \neq j \right\}.
\end{equation}
We note that \eqref{eq:dual1} is separable in the entries of $\Lam$, so if we define $\{i_k\}_{k=1}^N$ to be the indexing which sorts the absolute values of the entries of $\q$ in descending order, $|\q_{i_1}| \geq |\q_{i_2}| \geq \cdots \geq |\q_{i_N}|$, one can easily see that an optimal choice of $\Lam$ is given by
\begin{equation}
\Lam^*_{i_1} = \frac{\sgn(\q_{i_1})}{|\q_{i_2}|}, \ \ \ \Lam^*_{i_k} = \frac{\sgn(\q_{i_k})}{|\q_{i_1}|} \ \ \forall k \in [2, N] \implies \langle \Lam^*, \q \rangle = \frac{|\q_{i_1}|}{|\q_{i_2}|} + \frac{1}{|\q_{i_1}|} \sum_{k=2}^N |\q_{i_k}|.
\end{equation}
If we now minimize the above w.r.t. $\q$, note that the optimal value of the dual objective given by the above equation is invariant w.r.t. scaling the $\q$ vector by any non-zero scalar, so we can w.l.o.g. assume that $|\q_{i_1}|=1$ and note that this implies that problem \eqref{eq:Cq_obj} is equivalent to the following optimization problem over the magnitudes of $\q$ if we define $p_k = |\q_{i_k}|$:
\begin{equation}
\min_{ \{ p_k \}_{k=2}^N } \frac{1}{p_2} + p_2 + \sum_{k=3}^N p_k \st 1 \geq p_2 \geq p_3 \geq \cdots \geq p_N \geq 0.
\end{equation}
Now, note that for a non-negative scalar $\alpha \geq 0$ the minimum of $\alpha^{-1} + \alpha$ is achieved at $\alpha=1$, so one can clearly see that the optimal value for the above problem is achieved at $p_2=1$ and $p_k=0, \ \forall k \in [3, N]$.  From this we have that an optimal $\q$ for \eqref{eq:Cq_obj} must have exactly two non-zero entries and the non-zero entries must be equal in absolute value.  Further, this also implies that $\|\C^*\|_1 = 2$, and because we must have $\q^\top \C^* = \q^\top$,  if we scale $\q$ to have $\pm 1$ for its two non-zero entries, we then have $\|\q^\top \C^*\|_1=\|\q\|_1 = 2 = \|\C^*\|_1$, so if we let $(i,j)$ index the two non-zero entries of $\q$ we have:
\begin{equation}
2 = \|\q^\top \C^*\|_1 = \|\sgn(\q_i) (\C^*)^i + \sgn(\q_j) (\C^*)^j\|_1 \leq \|(\C^*)^i\|_1 + \|(\C^*)^j\|_1 \leq \|\C^*\|_1 = 2.
\end{equation}
This implies that all the non-zero entries of $\C^*$ must lie in rows $i$ and $j$, and if there is any overlap in the non-zero support of these rows the signs must match after multiplication by $\sgn(\q_i)$ and $\sgn(\q_j)$.  However, since $\q^\top \C$ must equal $\q^\top$ (which is zero everywhere except for entries $i$ and $j$) and the diagonal of $\C^*$ must be zero, the only way this can be achieved is for the two rows to have non-overlapping non-zero support, proving that the only non-zero entries of $\C$ must be $\C_{i,j}$ and $\C_{j,i}$ which take values in $\{-1,1\}$, depending on the choice of the signs for $\q_i$ and $\q_j$.  The result is completed by noting that since we require $\Z^*=\Z^*\C^*$, then $\Z^*_i$ and $\Z^*_j$ must be equal to within a sign-flip depending on the choice of the signs of the $\q$ vector.
\end{proof}

\subsection{Proof of Theorem \ref{thm:schatten}}

{
\renewcommand{\thetheorem}{\ref{thm:schatten}}
\begin{theorem}
Optimal solutions to the problems
\begin{align}
\mathrm{(P1)} \ \ \ &\min_{\Z,\C} \|\C\|_\Sp \st \Z=\Z \C, \ \|\Z\|_F^2 \geq \tau \\ 
\mathrm{(P2)} \ \ \ &\min_{\Z,\C} \|\C\|_\Sp \st \Z=\Z \C, \ \|\Z^i\|_F^2 \geq \tfrac{\tau}{d} \ \ \forall i, 
\end{align}
where $\|\C\|_\Sp$ is any Schatten-$p$ norm on $\C$, are characterized by the set
\begin{equation}
(\Z^*,\C^*) \in \left\{ (\z \q^\top) \times (\q \q^\top) : \q \in \Re^N, \|\q\|_F=1 \right\}
\end{equation}
where $\z \in \Re^d$ is an arbitrary vector which satisfies $\|\z\|_F^2 \geq \tau$ for $\mathrm{(P1)}$ and $\z_i^2 \geq \frac{\tau}{d}, \ \forall i$ for $\mathrm{(P2)}$.
\end{theorem}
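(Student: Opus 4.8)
The plan is to decouple the two problems into (i) a pure minimization of $\|\C\|_\Sp$ over the set of matrices $\C$ for which the constraint $\Z=\Z\C$ admits a nonzero solution, followed by (ii) a description of the admissible $\Z$ for each such optimal $\C$. For step (i), observe that in any feasible point the norm constraint forces $\Z\neq\0$, and $\Z=\Z\C$ says that every row of $\Z$ lies in the left null space of $\C-\I$; hence $\C-\I$ must have a nontrivial left null space, equivalently $1$ is an eigenvalue of $\C$. Conversely, given any such $\C$, any row vector in that left null space is an admissible direction for the rows of $\Z$ (this is exactly the $\sigma_{\min}(\C-\I)=0$ regime already used in the proof of Theorem \ref{thm:dataset_opt}). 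So the objective separates: first solve $\min_\C \|\C\|_\Sp$ subject to $1\in\mathrm{spec}(\C)$, then for each minimizer $\C^*$ characterize all $\Z$ that meet the original norm constraint.

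Second, I would establish the tight lower bound $\|\C\|_\Sp\geq 1$ on the reduced problem. Since the largest singular value dominates every Schatten-$p$ norm, $\|\C\|_\Sp\geq\sigma_{\max}(\C)$, and since the spectral radius is bounded above by the largest singular value, $\sigma_{\max}(\C)\geq\rho(\C)\geq 1$ whenever $1\in\mathrm{spec}(\C)$. The rank-one matrix $\C=\q\q^\top$ with $\|\q\|_F=1$ has $1\in\mathrm{spec}(\C)$ and $\|\C\|_\Sp=1$, so the bound is attained and every optimal $\C^*$ satisfies $\|\C^*\|_\Sp=1$. Equality in $\|\C\|_\Sp\geq\sigma_{\max}(\C)$ forces all singular values except the largest to vanish, so $\C^*$ is rank one with $\sigma_{\max}(\C^*)=1$; write $\C^*=\mathbf{u}\mathbf{v}^\top$ with $\|\mathbf{u}\|_F=\|\mathbf{v}\|_F=1$. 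The only possibly nonzero eigenvalue of $\mathbf{u}\mathbf{v}^\top$ is $\mathbf{v}^\top\mathbf{u}$, so the constraint $1\in\mathrm{spec}(\C^*)$ gives $\mathbf{v}^\top\mathbf{u}=1$; combined with $\|\mathbf{u}\|_F=\|\mathbf{v}\|_F=1$, the equality case of Cauchy--Schwarz yields $\mathbf{u}=\mathbf{v}$, hence $\C^*=\q\q^\top$ with $\|\q\|_F=1$.

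Finally, for step (ii): given $\C^*=\q\q^\top$, the left null space of $\C^*-\I=\q\q^\top-\I$ is exactly $\mathrm{span}(\q)$, since $\mathbf{x}^\top(\q\q^\top-\I)=\0$ forces $\mathbf{x}=(\mathbf{x}^\top\q)\q$. Thus every feasible $\Z$ has each row a scalar multiple of $\q^\top$, i.e., $\Z^*=\z\q^\top$ for some $\z\in\Re^d$. Then $\|\Z^*\|_F^2=\|\z\|_F^2$ and $\|(\Z^*)^i\|_F^2=\z_i^2$, so the constraint $\|\Z\|_F^2\geq\tau$ becomes $\|\z\|_F^2\geq\tau$ for $\mathrm{(P1)}$ and $\|\Z^i\|_F^2\geq\tfrac{\tau}{d}$ becomes $\z_i^2\geq\tfrac{\tau}{d}$ for $\mathrm{(P2)}$; conversely any $(\z\q^\top,\q\q^\top)$ of this form is feasible and attains objective value $1$, hence is optimal, giving the claimed exact characterization. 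I expect the one genuinely delicate point to be the bridge between the singular-value objective and the eigenvalue constraint $1\in\mathrm{spec}(\C)$: this is precisely where symmetry is forced, so that the minimizer has the form $\q\q^\top$ rather than an arbitrary rank-one matrix, and it relies on $\sigma_{\max}\geq\rho$ together with the Cauchy--Schwarz equality condition.
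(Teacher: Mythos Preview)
Your proof is correct and takes a genuinely different route from the paper's. The paper proceeds via Lagrangian duality: it fixes a candidate left null vector $\q$, forms the Lagrangian of $\min_\C \|\C\|_\Sp$ subject to $\q^\top(\C-\I)=\0$, computes the dual $\max_\Lambda \langle\Lambda,\q\rangle$ subject to $\|\q\Lambda^\top\|_\Sp^\circ\leq 1$, observes that on rank-one matrices every Schatten dual norm coincides with the Frobenius norm, reads off the optimal dual value $1$, and then extracts $\C^*=\q\Lambda^{*\top}=\q\q^\top/\|\q\|_F^2$ from the primal optimality condition $\q\Lambda^{*\top}\in\partial\|\C^*\|_\Sp$ together with Cauchy--Schwarz.

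Your argument is purely primal and more elementary: the chain $\|\C\|_\Sp\geq\sigma_{\max}(\C)\geq\rho(\C)\geq 1$ gives the same lower bound without any duality machinery, and the equality analysis (first forcing rank one from $\|\C\|_\Sp=\sigma_{\max}(\C)$, then forcing symmetry from $\mathbf{v}^\top\mathbf{u}=1$ with unit vectors) is cleaner than the paper's subdifferential step. What the paper's approach buys is a systematic template that could in principle be reused for other regularizers $\theta$ (as indeed it is in the $\ell_1$ case of Theorem~\ref{thm:opt_ssc}); what your approach buys is transparency and no reliance on convex-analysis facts about Schatten subdifferentials. One small caveat worth noting in either proof: the equality $\|\C\|_\Sp=\sigma_{\max}(\C)\Rightarrow\mathrm{rank}(\C)=1$ requires $p<\infty$; at $p=\infty$ the spectral norm admits non-rank-one minimizers such as $\C=\I$, so the characterization as stated implicitly excludes that endpoint.
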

\addtocounter{theorem}{-1}
}

\begin{proof}
To begin, by the same arguments as in Theorem \ref{thm:opt_ssc} we consider an optimization problem similar to \eqref{eq:Cq_obj} but for $\theta(\C) = \|\C\|_\Sp$ being any Schatten-$p$ norm and with the $\|\q\|_0$ constraint replaced by a $\q \neq \0$ constraint:
\begin{equation}
\min_{\C,\q} \|\C\|_\Sp \st \q^\top (\C -\I) = \0, \ \ \q \neq \0.
\end{equation}
Again forming the Lagrangian for $\C$ with $\q$ fixed we have:
\begin{align}
&\min_\C \left\{ L(\C,\Lambda) = \|\C\|_\Sp + \langle \Lambda, (\I-\C^\top)\q \rangle \right\} = \\
\label{eq:C_primal}
&\min_\C \|\C\|_\Sp - \langle \q \Lambda^\top, \C \rangle + \langle \Lambda, \q \rangle = \langle \Lambda, \q \rangle - \delta(\|\q \Lambda^\top \|_\Sp^\circ \leq 1)
\end{align}
which implies the dual problem is:
\begin{equation}
\max_{\Lambda} \langle \Lambda, \q \rangle \st \|\q \Lambda^\top \|_\Sp^\circ \leq 1
\end{equation}
where $\|\cdot\|_\Sp^\circ$ denotes the dual norm.  Note that for any Schatten-$p$ norm, the dual norm is again a Schatten-$p$ norm, but since we only evaluate the norm on rank-1 matrices this is equal to the Frobenius norm for all values of $p$.  As a result we have for all choices of Schatten-$p$ norm that the dual problem is equivalent to:
\begin{equation}
\max_{\Lambda} \left\{ \langle \Lambda, \q \rangle \st \|\q \Lambda^\top \|_F \leq 1 \right\} = \max_{\Lambda} \left\{ \langle \Lambda, \q \rangle \st \|\q\|_F \|\Lambda \|_F \leq 1 \right\}
\end{equation}
From the above, one can easily see that the optimal choice for $\Lambda$ is given as $\Lambda^* = \frac{\q}{ \|\q\|_F^2}$ and the optimal objective value is 1.  Further note that from primal optimality in \eqref{eq:C_primal} we must have that $\q (\Lambda^*)^\top \in \partial \|\C^*\|_\Sp$ which implies that $\langle \C^*, \q (\Lambda^*)^\top \rangle = \|\C^*\|_\Sp \|\q (\Lambda^*)^\top \|_\Sp^\circ = \|\C^*\|_\Sp =1$.  As a result, we have that $\C^* = \q (\Lambda^*)^\top$ by the Cauchy-Schwarz inequality and the fact that $\|\q (\Lambda^*)^\top\|_F=1$.  Thus since $\C^*$ is a rank-1 matrix then $\C^*-\I$ can only have one singular value equal to 0, so all the rows of $\Z^*$ must be equal to a scaling of $\q$.   Given this structure for $\Z^*$ the result then follows, where we also recall that $\|\C^*\|_{\Sp}=1$, which implies the constraints on $\q$ in the statement. 
\end{proof}


\subsection{Proof of Theorem \ref{thm:inst_norm}}

We now present the proof for Theorem \ref{thm:inst_norm}. Before proving the main result we first prove a simple Lemma which will be helpful.
\begin{lemma}
\label{lem:inst_lem}
For a given matrix $\Z \in \Re^{d \times N}$ and vector $\z \in \Re^d$ such that $\|\Z_i\|_F = \tau \ \forall i$ and $\|\z\|_F = \tau$, let $k \in [0, N]$ be the number of columns in $\Z$ which are equal to $\z$ to within a sign-flip (i.e., $\Z_i = \pm \z$).  Then, if $k \geq 1$ the following holds:
%
%
%
\begin{equation}
\label{eq:easy_c}
\min_\c \left\{ \|\c\|_1 \st \z = \Z \c \right\} = 1
\end{equation}
and $\c^*_i \neq 0 \implies \Z_i = \pm \z$.

Further, if $k=0$ we also have
\begin{equation}
\label{eq:easy_c2}
\min_\c \left\{ \|\c\|_1 \st \z = \Z \c \right \} > 1
\end{equation}
(where we use the convention that the objective takes value $\infty$ if $\z = \Z \c$ has no feasible solution.)
\end{lemma}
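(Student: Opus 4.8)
The plan is to reduce both parts of the Lemma to the equality case of the triangle inequality for the Euclidean norm. First I would record a \emph{universal lower bound}: for any $\c$ with $\z = \Z\c$, write $\z = \sum_{i=1}^N \c_i \Z_i$ and use $\|\Z_i\|_F = \tau$ and $\|\z\|_F = \tau$ to get
\begin{equation*}
\tau = \|\z\|_F = \Big\| \sum_{i=1}^N \c_i \Z_i \Big\|_F \le \sum_{i=1}^N |\c_i|\,\|\Z_i\|_F = \tau\,\|\c\|_1 ,
\end{equation*}
so (as $\tau>0$) every feasible $\c$ has $\|\c\|_1 \ge 1$. For the case $k \ge 1$, if some column satisfies $\Z_j = s\z$ with $s \in \{-1,1\}$, then the vector that is $s$ in coordinate $j$ and $0$ elsewhere is feasible with $\ell_1$ norm $1$; combined with the lower bound this shows the minimum in \eqref{eq:easy_c} equals $1$ and is attained.

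The crux is the support characterization. Suppose $\c^*$ attains the value $1$ in \eqref{eq:easy_c}. Then the displayed chain is tight, i.e. $\big\| \sum_i \c^*_i \Z_i \big\|_F = \sum_i |\c^*_i|\,\|\Z_i\|_F$. By the equality case of the triangle inequality in a Euclidean space, all the nonzero vectors $\c^*_i \Z_i$ are nonnegative multiples of a common unit vector $\mathbf{u}$; since $\|\c^*_i \Z_i\|_F = |\c^*_i|\tau$ this forces $\c^*_i \Z_i = |\c^*_i|\tau\,\mathbf{u}$, hence $\Z_i = \sgn(\c^*_i)\,\tau\,\mathbf{u}$ for every $i$ with $\c^*_i \ne 0$. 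Summing, $\z = \sum_i \c^*_i \Z_i = \big(\sum_i |\c^*_i|\,\tau\big)\mathbf{u} = \tau\,\mathbf{u}$, so $\Z_i = \pm\z$ whenever $\c^*_i \ne 0$, which is exactly the claimed implication (and it applies to every optimal $\c^*$, since all of them attain the value $1$).

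For the case $k = 0$: if $\z = \Z\c$ has no solution the minimum is $+\infty > 1$ by the stated convention; otherwise the $\ell_1$-minimization attains its minimum (objective coercive, feasible set a closed affine subspace), and by the lower bound this minimum is $\ge 1$. Were it exactly $1$, the argument of the previous paragraph would exhibit a column with $\Z_i = \pm\z$, contradicting $k=0$. Hence the minimum is strictly greater than $1$, completing the proof.

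The main obstacle is the equality analysis: one must invoke the equality case of the triangle inequality for a sum of \emph{several} vectors rather than two, and then do the sign bookkeeping carefully, so that "$\c^*_i \Z_i$ is a nonnegative multiple of $\mathbf{u}$" together with $\|\Z_i\|_F = \tau$ genuinely pins down $\Z_i = \sgn(\c^*_i)\,\tau\,\mathbf{u}$ and hence $\Z_i = \pm\z$. A secondary, routine point is the remark that the $\ell_1$ minimum is attained, which is what lets the $k=0$ conclusion be a strict inequality rather than merely an infimum.
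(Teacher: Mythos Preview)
Your proof is correct and takes a genuinely different, more elementary route than the paper. The paper establishes the optimal value $1$ via LP duality: it forms the Lagrangian, derives the dual problem $\max_\Lambda \langle \Lambda, \z \rangle$ subject to $\|\Z^\top \Lambda\|_\infty \le 1$, and observes that when $k\ge 1$ the constraint $|\langle \Lambda,\z\rangle|\le 1$ caps the dual at $1$, which is achieved by $\Lambda^* = \z/\tau^2$. For $k=0$ it again argues via the dual, noting that $\Lambda^*=\z/\tau^2$ leaves strict slack in every constraint (Cauchy--Schwarz is strict since $\bar\Z_i \ne \pm\z$), so one can scale $\Lambda$ by some $\alpha>1$ to push the dual value above $1$. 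Interestingly, for the support characterization the paper does fall back on the triangle inequality, though only in a one-line remark.

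You instead get everything directly from the equality case of the triangle inequality: the lower bound $\|\c\|_1\ge 1$ is immediate, the upper bound for $k\ge 1$ is a single coordinate vector, and both the support claim and the strict inequality for $k=0$ follow from analyzing when the triangle inequality is tight, combined with the observation that the $\ell_1$ minimum is attained on a nonempty closed affine set. Your argument is shorter and avoids duality entirely; the paper's dual approach has the minor advantage of giving an explicit certificate $\Lambda = \alpha\z/\tau^2$ for the $k=0$ case (so the strict inequality does not rely on attainment of the primal minimum) and meshes with the Lagrangian machinery used in the surrounding theorems.
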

\begin{proof}
Without loss of generality, assume the columns of $\Z$ are permuted so that $\Z$ has the form:
\begin{equation}
\label{eq:Z_form}
\Z = \left[ \z \mathbf{s}^\top, \ \bar \Z \right]
\end{equation}
where $\mathbf{s} \in \{-1,1\}^k$ is a vector with $k \in [0, N]$ elements each taking value $-1$ or $1$, and $\bar \Z \in \Re^{d \times (N-k)}$ contains all the columns of $\Z$ which are not equal to $\pm \z$.

First we consider the $k \geq 1$ case and note that the Lagrangian of \eqref{eq:easy_c} is given as:
\begin{equation}
L(\c,\Lambda) = \|\c\|_1 + \langle \Lambda, \z - \Z \c \rangle
\end{equation}
Now minimizing $L$ w.r.t. $\c$ gives
\begin{equation}
\min_\c \|\c\|_1 - \langle \Z^\top \Lambda, \c \rangle = -\delta( \| \Z^\top \Lambda \|_\infty \leq 1)
\end{equation}
which gives that the dual problem to \eqref{eq:easy_c} (with $k \geq 1$) is given by
\begin{equation}
\begin{split}
\label{eq:lem_dual}
&\max_\Lambda \langle \Lambda, \z \rangle \st \| \Lambda^\top \Z \|_\infty \leq 1 \iff \max_{\Lambda} \langle \Lambda, \z \rangle \st | \langle \Lambda, \Z_i \rangle | \leq 1, \ \forall i \in [1,N] \iff \\
&\max_{\Lambda} \langle \Lambda, \z \rangle \st | \langle \Lambda, \z \rangle | \leq 1, \  | \langle \Lambda, \bar \Z_i \rangle | \leq 1, \ \forall i \in [1, N-k]
\end{split}
\end{equation}
where the final equivalence is due to the special structure of $\Z$ in \eqref{eq:Z_form}.  Clearly from \eqref{eq:lem_dual} and the fact that $\|\z\|_F = \| \bar \Z_i \|_F = \tau, \ \forall i$ it is easily seen that an optimal $\Lambda$ is any vector such that $\langle \Lambda^*, \z \rangle = 1$, so as a result we have that the optimal solution to the problem in \eqref{eq:easy_c} has objective value 1.  Further, note that when $k \geq 1$, then due to the triangle inequality and the fact that all the vectors in $\Z$ have equal norm we can only achieve $\z = \Z \c^*$ with $\|\c^*\|_1 = 1$ if all the non-zero entries of $\c$ are in the first $k$ entries and the sign of any non-zero element of $\c^*$ must satisfy $sgn(\c^*_i) = \mathbf{s}_i, \ i \in [1,k]$.

To see that \eqref{eq:easy_c2} is true, first note that an optimal solution to \eqref{eq:lem_dual} with $k\geq 1$ is to choose $\Lambda^* = \z \tau^{-2}$ and that because $\bar \Z_i \neq \pm \z, \ \forall i$ we have $| \langle \bar \Z_i, \Lambda^* \rangle | = | \langle \bar \Z_i, \z \tau^{-2} \rangle | < \| \bar \Z_i \|_F \| \z\tau^{-2} \|_F = 1$.  Further, note that the problem in \eqref{eq:easy_c2} (with $k=0$) will have an equivalent dual problem to \eqref{eq:lem_dual}, with the $| \langle \Lambda, \z \rangle | \leq 1$ constraint removed, which shows the inequality, as we can always take $\Lambda = \alpha \z \tau^{-2}$ for some $\alpha > 1$ and remain dual feasible, giving a dual value (and hence optimal objective value) for \eqref{eq:easy_c2} strictly greater than 1. 
\end{proof}
With this result we are now ready to prove Theorem \ref{thm:inst_norm}.

{
\renewcommand{\thetheorem}{\ref{thm:inst_norm}}
\begin{theorem}
Optimal solutions to the problem
\begin{equation}
\min_{\Z,\C} \|\C\|_1 \st \diag(\C)=\0, \ \ \Z=\Z \C, \ \ \|\Z_i\|_F^2 = \tau \ \ \forall i
\end{equation}
must have the property that for any column in $\Z^*$, $\Z^*_i$, there exists another column, $\Z^*_j \ (i \neq j)$, such that $\Z^*_i = s_{i,j} \Z_j^*$ where $s_{i,j} \in \{-1,1\}$.  Further, $\|\C^*_i\|_1 = 1 \ \forall i$ and $\C_{i,j} \neq 0 \implies \Z^*_i = \pm \Z^*_j$.
\end{theorem}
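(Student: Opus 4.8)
The plan is to reduce everything to a column-by-column application of Lemma~\ref{lem:inst_lem}. First I would rewrite the self-expressive constraint $\Z = \Z\C$ with $\diag(\C)=\0$ as $N$ decoupled representation problems: the $j^\text{th}$ column of the constraint reads $\Z_j = \sum_{i\neq j} \Z_i \C_{i,j}$, so $\Z_j$ must be expressed as a linear combination of the \emph{other} columns of $\Z$, with coefficient vector obtained from the $j^\text{th}$ column $\C_j$ by deleting its (zero) $j^\text{th}$ entry. Since the entrywise $\ell_1$ norm splits as $\|\C\|_1 = \sum_{j=1}^N \|\C_j\|_1$, the objective splits accordingly. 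Along the way note that feasibility forces $N \geq 2$, since for $N=1$ we would need $\Z_1 = \0$, contradicting $\|\Z_1\|_F^2 = \tau > 0$.

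Next I would invoke Lemma~\ref{lem:inst_lem} for each $j$: take $\z = \Z_j$ and let the lemma's matrix be the submatrix of $\Z$ obtained by deleting column $j$; all of its columns have Frobenius norm $\sqrt{\tau} = \|\z\|_F$, so the hypotheses hold. Letting $k_j$ be the number of indices $i \neq j$ with $\Z_i = \pm \Z_j$, the lemma gives: if $k_j \geq 1$ then $\|\C_j\|_1 \geq 1$, with equality only if the support of $\C_j$ lies among those ``twin'' indices; if $k_j = 0$ then $\|\C_j\|_1 > 1$ whenever a feasible representation exists (and at a feasible point of the original problem it does). Hence $\|\C_j\|_1 \geq 1$ for every $j$, so $\|\C\|_1 \geq N$ for every feasible $(\Z,\C)$.

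I would then check the bound is attained, so that the optimal value is exactly $N$: pick any $\z$ with $\|\z\|_F^2 = \tau$, let every column of $\Z$ equal $\z$, and let $\C$ be the permutation matrix of the cyclic shift $j \mapsto (j \bmod N) + 1$. For $N \geq 2$ this permutation has no fixed points, so $\diag(\C) = \0$; moreover $\Z\C = \Z$ (permuting identical columns) and $\|\C\|_1 = N$. Consequently, at any optimum $\|\C^*\|_1 = N$, and since each of the $N$ summands $\|\C^*_j\|_1$ is at least $1$, we must have $\|\C^*_j\|_1 = 1$ for all $j$ — the second claim. The equality case of the lower bound then forces $k_j \geq 1$ for every $j$, i.e., every column $\Z^*_j$ has a sign-copy $\Z^*_i = \pm \Z^*_j$ among the others — the first claim — and the equality clause of Lemma~\ref{lem:inst_lem} forces the nonzero entries of $\C^*_j$ (equivalently, the indices $i$ with $\C^*_{i,j} \neq 0$) to be confined to those twin indices, giving $\C^*_{i,j} \neq 0 \implies \Z^*_i = \pm \Z^*_j$ — the third claim.

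The argument is mostly bookkeeping once Lemma~\ref{lem:inst_lem} is available; the points needing care are the translation between columns of $\C$ and the lemma's single-vector problem (in particular correctly accounting for the deleted diagonal entry and the case distinction $k_j = 0$ versus $k_j \geq 1$, including the infeasibility convention), and verifying that the cyclic-shift construction is fixed-point-free for all $N \geq 2$ so that the value $N$ is genuinely attained. There is no substantial obstacle beyond this, since the essential content has been isolated in the lemma.
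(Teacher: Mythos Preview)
Your proposal is correct and follows essentially the same approach as the paper: both arguments reduce the problem to a column-by-column application of Lemma~\ref{lem:inst_lem}, using the $k\geq 1$ case to get $\|\C_j\|_1\geq 1$ and the $k=0$ case to get the strict inequality that rules out ``distinct'' columns at optimality. The only cosmetic difference is that the paper phrases the second half as a proof by contradiction (assume an optimal pair violates the stated structure and derive $\|\tilde\C\|_1>N$), whereas you argue directly by establishing the lower bound $\|\C\|_1\geq N$, exhibiting an explicit attaining configuration (the cyclic-shift $\C$ on constant columns), and then reading off the equality conditions.
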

\addtocounter{theorem}{-1}
}
\begin{proof}
First, note that any $\Z^*$ which satisfies the conditions of the Theorem achieves optimal objective value with $\|\C_i^*\|_1 = 1, \ \forall i$ and $\C_{i,j} \neq 0 \implies \Z_i^* = \pm \Z_j^*$ directly from Lemma \ref{lem:inst_lem} since when we are finding an optimal $\C_i$ encoding for column $\Z_i^*$ there must exist another column in $\Z^*$ which is equal to $\Z_i^*$ to within a sign-flip ($k \geq 1$ in Lemma \ref{lem:inst_lem}).

To show that this is optimal, we will proceed by contradiction and assume we have a feasible pair of matrices $(\tilde \Z, \tilde \C)$ which does not satisfy the conditions of the Theorem but $\| \tilde \C\|_1 \leq N = \|\C^*\|_1$.  Note that because $\tilde \Z$ does not satisfy the conditions of the Theorem this implies that at least one column of $\tilde \Z$ must be distinct (i.e., $\exists i : \tilde \Z_i \neq \pm \tilde \Z_j, \  \forall j \neq i$).  As a result, for any column $\tilde \Z_i$ which is distinct we must have $\| \tilde \C_i \|_1 > 1$ from Lemma \ref{lem:inst_lem} ($k=0$ case).  If we let $\mathcal{I}$ denote the set of indices of the distinct columns in $\tilde \Z$ and $\mathcal{I}^\circ$ the compliment of $\mathcal{I}$ we then have
\begin{align}
\|\tilde \C_i\|_1 &= \sum_{i \in \mathcal{I}} \| \tilde \C_i \|_1 + \sum_{j \in \mathcal{I}^\circ} \| \tilde \C_j\|_1 \\
&= \sum_{i \in \mathcal{I}}  \| \tilde \C_i\|_1 + | \mathcal{I}^\circ| \\
&> | \mathcal{I} | + | \mathcal{I}^\circ| = N
\end{align}
where the first equality comes from noting that for any $\tilde \Z_j, \ j\in \mathcal{I}^\circ$ corresponds to the $k \geq 1$ situation in Lemma \ref{lem:inst_lem} and the inequality comes from noting that any $\tilde \Z_i, \ i \in \mathcal{I}$ corresponds to the $k=0$ situation in Lemma \ref{lem:inst_lem} and the fact that $| \mathcal{I} | \geq 1$.  We thus have the contradiction and the result is completed.
\end{proof}

\subsection{Proof of Proposition \ref{prop:arch_exp}}

{
\renewcommand{\theproposition}{\ref{prop:arch_exp}}
\begin{proposition}
Consider encoder and decoder networks with the form given in \eqref{eq:autoenc_example}.  Then,  given any dataset $\X \in \Re^{d_x \times N}$ where the linear operators in both the encoder/decoder can express identity on $\X$ and any $\tau > 0$  there exist network parameters $(\W_e, \W_d)$ which satisfy the following:
{
\setdefaultleftmargin{0pt}{}{}{}{}{}
\begin{enumerate}
\item Embedded points are arbitrarily close: $\| \Phi_E(\X_i,\W_e) \! - \! \Phi_E(\X_j, \W_e) \| \leq \epsilon$ $\forall(i,j)$ and $\forall \epsilon > 0$.
\vspace{-2mm}
\item Embedded points have norm arbitrarily close to $\tau$: $| \| \Phi_E(\X_i,\W_e)\|_F - \tau | \leq \epsilon$ $\forall i$ and $\forall \epsilon > 0$. 
\vspace{-2mm}
\item Embedded points can be decoded exactly: $\Phi_D(\Phi_E(\X_i,\W_e),\W_d) = \X_i, \ \forall i$.
\end{enumerate}
}
\end{proposition}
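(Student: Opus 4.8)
The plan is to exploit the freedom in the affine weights via a ``shrink in the encoder, re-expand in the decoder'' construction, combined with large positive first-layer biases that force every ReLU to operate in its linear regime on the finite set $\X$. First I would fix any $\mathbf{v}\in\Re^{d}$ with $\|\mathbf{v}\|_F=\tau$ and let $M=\max_i\|\X_i\|_F$. Let $(\Wb_e^{2,0},\Wb_e^{1,0})$ be linear operators expressing identity on $\X$ (so $\Wb_e^{2,0}\Wb_e^{1,0}\X=\X$) and let $(\Wb_d^{2,0},\Wb_d^{1,0})$ be a corresponding pair for the decoder. For a scalar $\mu\in(0,1]$ to be chosen, take $\Wb_e^1=\mu\Wb_e^{1,0}$, $\Wb_e^2=\Wb_e^{2,0}$, and $\b_e^1=c\,\1$ with the constant $c$ large enough that every entry of $\mu\Wb_e^{1,0}\X_i+c\,\1$ is positive for all $i$; such a $c$ exists (and can be taken independent of $\mu$) since the entries of $\mu\Wb_e^{1,0}\X_i$ are bounded uniformly over $i$ and over $\mu\in(0,1]$. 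Then every hidden ReLU in the encoder acts as the identity, so $\Phi_E(\X_i,\W_e)=\Wb_e^{2,0}\big(\mu\Wb_e^{1,0}\X_i+c\,\1\big)+\b_e^2=\mu\X_i+c\,\Wb_e^{2,0}\1+\b_e^2$, and choosing $\b_e^2=\mathbf{v}-c\,\Wb_e^{2,0}\1$ gives the clean identity $\Phi_E(\X_i,\W_e)=\mu\X_i+\mathbf{v}$.

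Properties 1 and 2 then follow directly from this formula: $\|\Phi_E(\X_i,\W_e)-\Phi_E(\X_j,\W_e)\|_F=\mu\|\X_i-\X_j\|_F\le 2\mu M$, and by the triangle inequality $\big|\,\|\Phi_E(\X_i,\W_e)\|_F-\tau\,\big|=\big|\,\|\mu\X_i+\mathbf{v}\|_F-\|\mathbf{v}\|_F\,\big|\le\mu M$, so given any prescribed $\epsilon>0$ it suffices to take $\mu\le\epsilon/(2M)$. This is the step where the nonzero shift $\mathbf{v}$ is essential: without it the embedded points would have norm $\mu\|\X_i\|_F\to 0$ rather than $\tau$.

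For the decoder I would undo the $\mu$-scaling by scaling the decoder weights \emph{up}. Set $\Wb_d^1=\Wb_d^{1,0}$, $\Wb_d^2=\tfrac{1}{\mu}\Wb_d^{2,0}$, and $\b_d^1=c'\1-\Wb_d^{1,0}\mathbf{v}$ with $c'$ large enough (again uniformly in $\mu$) that every entry of $\mu\Wb_d^{1,0}\X_i+c'\1$ is positive for all $i$. Feeding $\z=\Phi_E(\X_i,\W_e)=\mu\X_i+\mathbf{v}$ into the decoder, the first preactivation equals $\Wb_d^{1,0}(\mu\X_i+\mathbf{v})+\b_d^1=\mu\Wb_d^{1,0}\X_i+c'\1$, which is entrywise positive, so the ReLU is again the identity and, using linearity and $\Wb_d^{2,0}\Wb_d^{1,0}\X_i=\X_i$, $\Phi_D(\z,\W_d)=\tfrac1\mu\Wb_d^{2,0}\big(\mu\Wb_d^{1,0}\X_i+c'\1\big)+\b_d^2=\Wb_d^{2,0}\Wb_d^{1,0}\X_i+\tfrac{c'}{\mu}\Wb_d^{2,0}\1+\b_d^2=\X_i+\tfrac{c'}{\mu}\Wb_d^{2,0}\1+\b_d^2$; setting $\b_d^2=-\tfrac{c'}{\mu}\Wb_d^{2,0}\1$ yields $\Phi_D(\Phi_E(\X_i,\W_e),\W_d)=\X_i$ exactly, which is property 3. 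No norm constraint is imposed on the weights, so the blow-up of $\Wb_d^2$ as $\mu\to 0$ is harmless.

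I do not anticipate a genuine obstacle; the construction is essentially bookkeeping of affine compositions. The two points needing care are (i) choosing the constant biases $c,c'$ large enough to keep every ReLU in its linear regime uniformly over the finite dataset, and (ii) reading the statement correctly: since property 3 is exact while properties 1 and 2 involve the strictly positive quantities $\mu\|\X_i-\X_j\|_F$ and $\mu\|\X_i\|_F$, the claim should be understood as ``for each $\epsilon>0$ there is a choice of $\mu$, hence of $(\W_e,\W_d)$, for which 1 and 2 hold with that $\epsilon$ and 3 holds exactly,'' which is exactly what the construction above delivers.
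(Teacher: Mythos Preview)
Your construction is correct and follows essentially the same strategy as the paper's proof: pick large positive first-layer biases so the ReLUs act as identities on the finite dataset, shrink the data in the encoder by a small scalar, and undo the shrinkage in the decoder by scaling up its weights, with the affine offsets chosen so that the embedded points cluster near a fixed vector of norm $\tau$. The only cosmetic difference is that you fix the target offset $\mathbf{v}$ up front and use $\b_e^2$ to enforce it exactly, whereas the paper sets $\b_e^2=\0$ and obtains the offset as $\beta\Wb_e^2\b_e^1$, then tunes the extra scalar $\beta$ to hit norm $\tau$; your version is arguably cleaner since it avoids needing $\Wb_e^2\b_e^1\neq\0$.
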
 
\addtocounter{proposition}{-1}
}
 \begin{proof}
 To begin, let $(\tilde \Wb_e^1, \tilde \Wb_e^2)$ and $(\tilde \Wb_d^1, \tilde \Wb_d^2)$ be choices of linear operator parameters such that $\tilde \Wb_e^2 \tilde \Wb_e^1\X = \tilde \Wb_d^2 \tilde \Wb_d^1 \X = \X$ which always must exist since the operators can express identity on $\X$.  Now, for an arbitrary $\alpha > 0$ let $\tilde \b_e^1$ be any vector such that $\alpha \tilde \Wb_e^1 \X_i + \b_e^1$ is non-negative for all $i$ (note that this is always possible by taking $\tilde \b_e^1$ to be a sufficiently large non-negative vector).  Note that now if we choose $\tilde \b_e^2 = \0$ we then have $\forall i$ and all $\beta > 0$:
 \begin{align}
(\beta \tilde \Wb_e^2) (\alpha \tilde \Wb_e^1 \X_i + \tilde \b_e^1)_+ + \tilde \b_e^2 = (\beta \tilde \Wb_e^2) (\alpha \tilde  \Wb^1 \X_i + \tilde \b_e^1) =  \alpha \beta \X_i + \beta \tilde \Wb_e^2 \tilde \b_e^1
\end{align}  
where the ReLU function becomes an identity operator due to the fact that we have all non-negative entries. Likewise, we can choose $\tilde \b_d^1$ to be an arbitrary vector such that $(\beta^{-1} \tilde \Wb_d^1) (\alpha \beta \X_i + \beta \Wb_e^2 \tilde \b_e^1) + \tilde \b_d^1$ is non-negative for all $\X_i$, so if we choose $\tilde \b_d^2 = -(\alpha^{-1} \tilde \Wb_d^2) [\tilde \Wb_d^1 \tilde \Wb_e^2 \tilde \b_e^1 + \tilde \b_d^1]$ we then have:
\begin{align}
\begin{split}
&(\alpha^{-1} \tilde \Wb_d^2) [ (\beta^{-1} \tilde \Wb_d^1 )(\alpha \beta \X_i + \beta \tilde \Wb_e^2 \tilde \b_e^1) + \tilde \b_d^1]_+ + \tilde \b_d^2 \\
= &(\alpha^{-1} \tilde \Wb_d^2) [ (\beta^{-1} \tilde \Wb_d^1) (\alpha \beta \X_i + \beta \tilde \Wb_e^2 \tilde \b_e^1) + \tilde \b_d^1] + \tilde \b_d^2 \\
=& \tilde \Wb_d^2 \tilde \Wb_d^1 \X_i + (\alpha^{-1} \tilde \Wb_d^2) [\tilde \Wb_d^1 \tilde \Wb_e^2 \tilde \b_e^1 + \tilde \b_d^1] + \tilde \b_d^2 \\
=& \X_i
\end{split}
\end{align}
 So as a result we have constructed a set of encoder/decoder weights which satisfies the third condition of the statement.  Further, the embedded points in this construction are of the form
 \begin{equation}
 \Z_i = \alpha \beta \X_i + \beta \tilde \Wb_e^2 \tilde \b_e^1
 \end{equation}
 so since we can form such a construction for an arbitrary $\alpha > 0$ and $\beta > 0$ we can choose $\alpha \rightarrow 0$ arbitrarily small and $\beta = \tau \| \tilde \Wb_e^2 \tilde \b_e^1 \|_F^{-1}$ to give that all the embedded points $\Z_i$ are arbitrarily close to the point $\tau \tilde \Wb_e^2 \tilde \b_e^1 \| \tilde \Wb_e^2 \tilde \b_e^1\|_F^{-1}$, which completes the result.
 \end{proof}

\section{Additional Results and Details}

Here we give a few additional results which expand on results given in the main paper along with extra details regarding our experiments.


%


\subsection{Experiments with Real Data}

In addition to the results we show in the main paper, we also present additional experimental results on real data.  In particular Figure \ref{fig:yaleB_opt} (Left) shows the magnitude of the embedded representation, $\Z$, using the original code from \cite{Ji:NIPS17} to solve model \eqref{eq:autoenc_obj} using the YaleB dataset (38 faces).  Note that the optimization never reaches a stationary point with the magnitude of the embedded representation continually decreasing (as predicted by Proposition \ref{prop:ill-posed}).  Additionally, if one looks at the singular values (normalized by the largest singular value) for the embedding of data points from one class (Right), then training the autoencoder without the $F(\Z,\C)$ term results in a geometry that is closer to a linear subspace.  Further, the raw data is actually closer to a linear subspace than after training the full SEDSC model (comparing Red and Blue curves).  Interestingly, the fact that the autoencoder features and raw data is closer to a linear subspace than SEDSC is also consistent with the clustering performance we show in Table \ref{tab:acc}, where for the setting without the post-processing the autoencoder-only features achieve the best clustering results, followed by the raw data, followed by SEDSC.

\begin{figure}
\includegraphics[width=.95\textwidth]{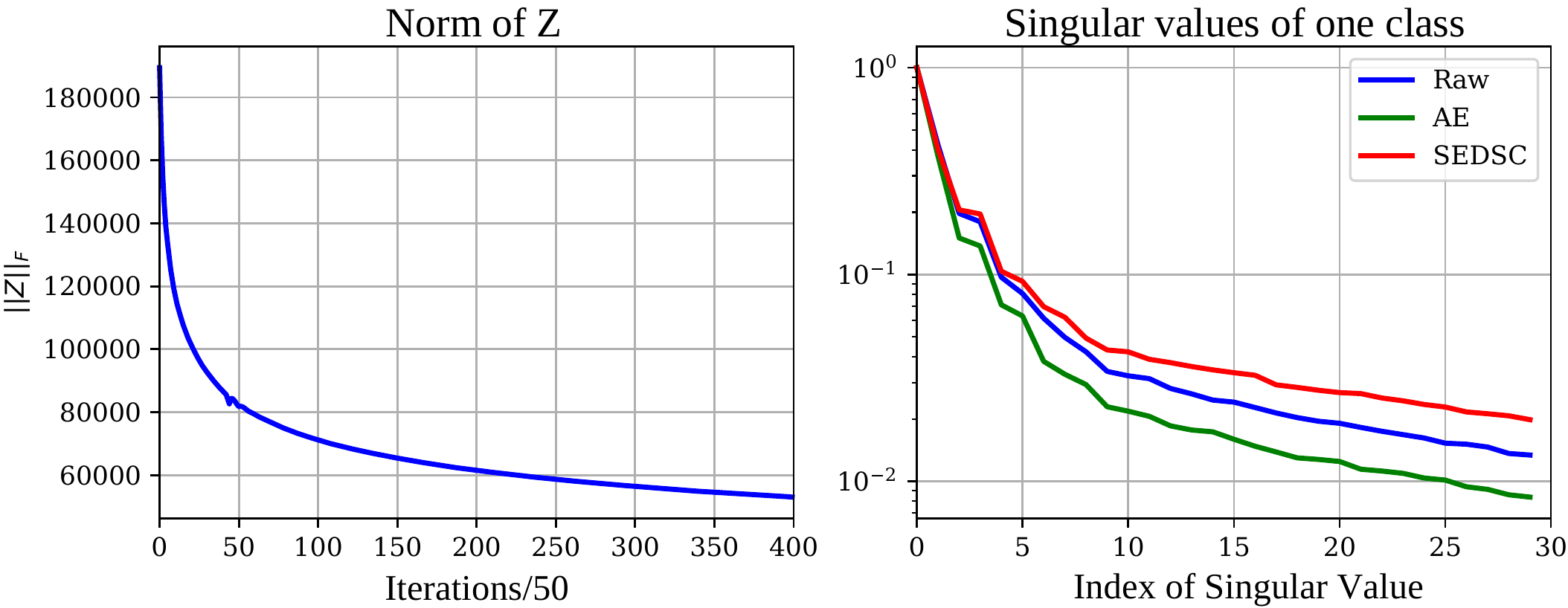}
  \caption{ Experiments on Extended Yale B dataset. \textbf{(Left)} The norm of the embedded representation $\Z$ as training proceeds. \textbf{(Right)} The singular values of the embedded representation of points from one class, normalized by the largest singular value.  (Raw) The singular values of the raw data. (AE) The singular values of the embedded representation from an autoencoder trained without the $F(\Z,\C)$ term. (SEDSC) The singular values of the embedded representation from the full SEDSC model \eqref{eq:autoenc_obj}.}
\label{fig:yaleB_opt}
\vspace{-2mm}
\end{figure}

\myparagraph{Details of experiments with real data}
We use the code provided by the authors of \cite{Ji:NIPS17}\footnote{ \url{https://github.com/panji1990/Deep-subspace-clustering-networks}}.
The code implements the model in \eqref{eq:autoenc_obj} with $\theta(\C) = \frac{1}{2} \|\C\|_F^2$ and $\ell(\cdot, \cdot)$ being the squared loss. 
The training procedure, as described in \cite{Ji:NIPS17}, involves pre-training an antoencoder network without the $F(\Z, \C)$ term. 
Such pretrained models for each of the three datasets are also provided alongside with their code. 
Then, the encoder and decoder networks of SEDSC are initialized by the pre-trained networks and all model parameters are trained via the Adam optimizer. 

The implementation details of the three methods reported in Figure~\ref{fig:real_data} and Table~\ref{tbl:results}, namely Raw Data, Autoenc only and Full SEDSC, are as follows. 
For Raw Data, we solve the model in \eqref{eq:basic_obj} with $\theta(\C) = \frac{1}{2} \|\C\|_F^2$ and $\lambda$ chosen in the set $\{0.5, 1, 2, 5, 10, 20, 50, 100, 200, 500\}$ that gives the highest averaged clustering accuracy over $10$ independent trials. 
For Autoenc only, we use the pretrained encoder and decoder networks to initialize SEDSC, then freeze the encoder and decoder networks and use Adam to optimize the variable $\C$ only. 
The results for Full SEDSC are generated by running the code as it is. 
Finally, the same post-processing step is adopted for all three methods (i.e., we do not fine-tune it for Raw Data and Autoenc only). 

\subsection{Experiments with Synthetic Data}

In addition to the results shown in the main paper we additionally conduct similar experiments with synthetic data for the Instance Normalization and the Batch/Channel Normalization scheme.

\myparagraph{Details of experiments with real data}
For the Dataset and Batch/Channel normalization experiments we directly add a normalization operator to the encoder network which normalizes the output of the encoder such that the entire dataset has unit Frobenius norm (Dataset Normalization) or each row of the embedded dataset has unit norm (Batch/Channel Normalization) before passing to the self-expressive layer.  For the Instance Normalization setting we add the regularization term proposed in \cite{Peng:arxiv17} with the form $\gamma_2 \sum_{i=1}^N ( \Z_i^\top \Z_i -1)^2$ to the objective in \eqref{eq:autoenc_obj}.  We use regularization hyper-parameters $(\lambda, \gamma) = (10^{-4}, 2)$ for all cases and $\gamma_2 = 10^{-4}$ for the Instance Normalization case.

We first pretrain the autoencoder without the $F(\Z,\C)$ term (i.e., $\gamma=0$ and $\C$ fixed at $\I$), and we initialize the $\C$ matrix to be the solution to \eqref{eq:basic_obj} using the $\Z$ embedding from the pretrained autoencoder.  Following this we perform standard proximal gradient descent \citep{Parikh:2013} on the full dataset, where we take a gradient descent step on all of the model parameters for the full objective excluding the $\theta(\C)$ term, then we solve the proximal operator for $\theta(\C)$.  Figure \ref{fig:syn_exp_full} shows the results of this experiment, where we plot the original dataset along with the reconstructed output of the autoencoder (Left), the embedded representation after pretraining the autoencoder (Left Center-Blue) and after fully training the model (Left Center-Red), the absolute value of the final $\C$ matrix (Right Center), and the same plot with a logarithmic color scale to better visualize small entries (Right).

From Figure \ref{fig:syn_exp_full} one can see that our theoretical predictions are largely confirmed experimentally.  Namely, first examining the Batch and Dataset normalization experiments one sees that when the full SEDSC model is trained the embedded representation is very close to as predicted by Theorem \ref{thm:opt_ssc}, with almost all of the embedded points (Left Center - Red points) being near the origin with the exception of two points, which are co-linear with each other.  Likewise, the $\C$ matrix is dominated by two non-zero entries with the remaining non-zero entries only appearing on the log-scale color scale.  Further, the Instance normalization experiment also largely confirms our theoretical predictions, where all the points are co-linear and largely identical copies of a single point.

\begin{figure}
\includegraphics[width=\textwidth]{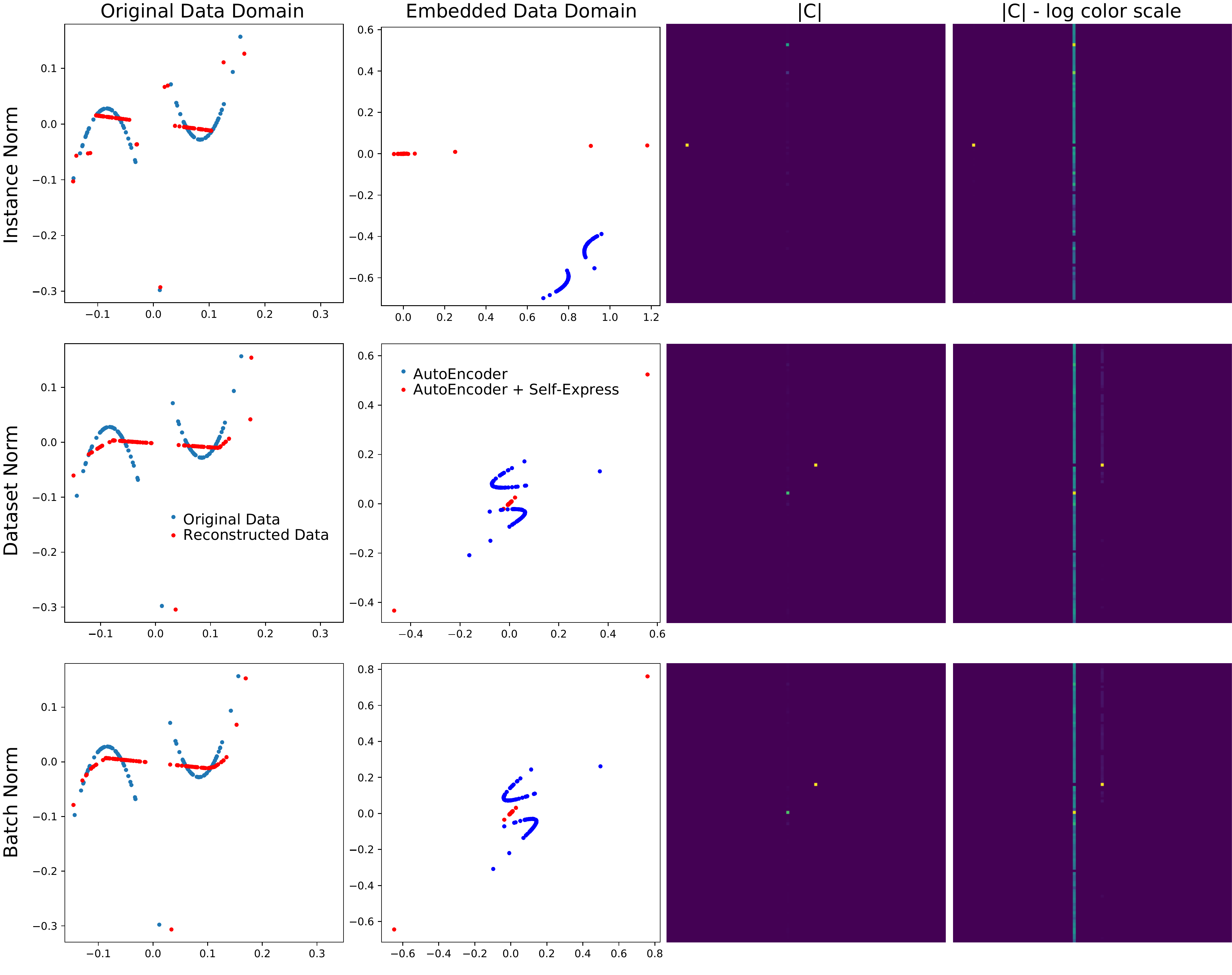} 
  \caption{ Showing results for the synthetic dataset for three normalization schemes (along the rows).  Instance Normalization (top); Dataset Normalization (center); Batch/Channel Normalization (bottom).  The columns are the same as described in the main paper. \textbf{(Left)} Original data points (Blue) and the data points at the output of the autoencoder when the full model \eqref{eq:autoenc_obj} is used (Red).  \textbf{(Center Left)} Data representation in the embedded domain when just the autoencoder is trained without the $F(\Z,\C)$ term (Blue) and the full model is used (Red).  \textbf{(Center Right)} The absolute value of the recovered $\C$ encoding matrix when trained with the full model. \textbf{(Right)} Same plot as the previous column but with a logarithmic color scale to visualize small entries.}
\label{fig:syn_exp_full}
\end{figure}

\end{document}